\documentclass{article} 
\usepackage{iclr2023_conference,times}


\usepackage{amsmath,amsfonts,bm}









\def\eqref#1{equation~\ref{#1}}









\def\1{\bm{1}}










\DeclareMathAlphabet{\mathsfit}{\encodingdefault}{\sfdefault}{m}{sl}
\SetMathAlphabet{\mathsfit}{bold}{\encodingdefault}{\sfdefault}{bx}{n}













\usepackage[utf8]{inputenc} 
\usepackage[T1]{fontenc}    
\usepackage{hyperref}       
\usepackage{url}            
\usepackage{booktabs}       
\usepackage{amsfonts}       
\usepackage{nicefrac}       
\usepackage{xcolor}         

\usepackage[pdftex]{graphicx}
\usepackage{float}
\usepackage{amsmath}
\usepackage{amssymb}
\usepackage{amsthm}
\usepackage{wrapfig}
\usepackage{subfig}
\usepackage{soul}
\usepackage[ruled]{algorithm2e}
\usepackage{multirow}
\usepackage{chngpage}
\usepackage{setspace}
\usepackage{enumitem}
\usepackage[figuresright]{rotating}

\newcommand{\fullname}{Hidden-Utility Self-Play}
\newcommand{\name}{HSP}
\newtheorem{theorem}{Theorem}[section]
\newtheorem{lemma}{Lemma}[section]

\title{Learning Zero-Shot Cooperation with Humans, Assuming Humans Are Biased}


\author{Chao Yu$^{1*}$,  
Jiaxuan Gao$^{1,2}$\thanks{Equal contribution} \hspace{0.1mm}, Weilin Liu$^{1}$, Botian Xu$^{2}$, Hao Tang$^{1}$, Jiaqi Yang$^{3}$, 
Yu Wang$^{1}$\footnotemark[2] \hspace{0.1mm}, 
Yi Wu$^{1,2,4}$\thanks{Corresponding Author} 
\\ [0.5ex]
$^1$ Tsinghua University, $^2$ Shanghai Artificial Intelligence Laboratory, $^3$ UC Berkeley, $^4$ Shanghai Qi Zhi Institute  \\ [0.5ex]
\texttt{zoeyuchao@gmail.com}
}

%

\iclrfinalcopy 
\begin{document}

\maketitle

\vspace{-4mm}
\begin{abstract}
There is a recent trend of applying multi-agent reinforcement learning (MARL) to train an agent that can cooperate with humans in a zero-shot fashion without using any human data. The typical workflow is to first repeatedly run self-play (SP) to build a policy pool and then train the final adaptive policy against this pool. A crucial limitation of this framework is that every policy in the pool is optimized w.r.t. the environment reward function, which implicitly assumes that the testing partners of the adaptive policy will be precisely optimizing the same reward function as well. However, human objectives are often substantially biased according to their own preferences, which can differ greatly from the environment reward. We propose a more general framework, \emph{\fullname} (\name), which explicitly models human biases as hidden reward functions in the self-play objective. By approximating the reward space as linear functions, {\name} adopts an effective technique to generate an augmented policy pool with biased policies. We evaluate {\name} on the \emph{Overcooked} benchmark. Empirical results show that our {\name} method produces higher rewards than baselines when cooperating with learned human models, manually scripted policies, and real humans. The {\name} policy is also rated as the most assistive policy based on human feedback.
\end{abstract}
\vspace{-4mm}
\section{Introduction}
\vspace{-2mm}

Building intelligent agents that can interact with, cooperate and assist humans remains a long-standing AI challenge with decades of research efforts~\citep{klien2004ten,ajoudani2018progress,dafoe2021cooperative}. Classical approaches are typically model-based, which (repeatedly) build an effective behavior model over human data and plan with the human model~\citep{sheridan2016human,carroll2019utility,bobu2020less}. 
Despite great successes, this model-based paradigm requires an expensive and time-consuming data collection process, which can be particularly problematic for complex problems tackled by today's AI techniques~\citep{kidd2008robots,biondi2019human} and may also suffer from privacy issues~\citep{pan2019you}. 

Recently, multi-agent reinforcement learning (MARL) has become a promising approach for many challenging decision-making problems. Particularly in competitive settings, AIs developed by MARL algorithms based on self-play (SP) defeated human professionals in a variety of domains~\citep{silver2018general,vinyals2019grandmaster,berner2019dota}.
This empirical evidence suggests a new direction of developing strong AIs that can directly cooperate with humans in a similar ``model-free'' fashion, i.e., via self-play. 
 
Different from zero-sum games, where simply adopting a Nash equilibrium strategy is sufficient, an obvious issue when training cooperative agents by self-play is \emph{convention overfitting}. Due to the existence of a large number of possible optimal strategies in a cooperative game, SP-trained agents can easily converge to a particular optimum and make decisions solely based on a specific behavior pattern, i.e., \emph{convention}~\citep{lowe2019pitfalls,hu2020other}, of its co-trainers, leading to poor generalization ability to unseen partners. 
To tackle this problem, recent works proposed a two-staged  framework by first developing a diverse policy pool consisting of multiple  SP-trained policies, which possibly cover different conventions, and then further training an adaptive policy against this policy pool~\citep{lupu2021trajectory,strouse2021collaborating,zhao2021maximum}. 

Despite the empirical success of this two-staged framework, a fundamental drawback exists. Even though the policy pool prevents convention overfitting, each SP-trained policy in the pool remains a solution, which is either optimal or sub-optimal, to a fixed reward function specified by the underlying cooperative game. This implies a crucial generalization assumption that any test-time partner will be \emph{precisely} optimizing the specified game reward. Such an assumption results in a pitfall in the case of cooperation with \emph{humans}. 
Human behavior has been widely studied in cognitive science~\citep{griffiths2015manifesto}, economics~\citep{wilkinson2017introduction} and game theory~\citep{fang2021introduction}. Systematic research has shown that humans' utility functions can be substantially biased even when a clear objective is given~\citep{pratt1978risk,selten1990bounded,camerer2011behavioral,barberis2013thirty}, suggesting that human behaviors may be subject to an unknown reward function that is very different from the game reward~\citep{nguyen2013analyzing}.
This fact reveals an algorithmic limitation of the existing SP-based methods. 

In this work, we propose \emph{\fullname} ({\name}), which extends the SP-based two-staged framework to the assumption of biased humans. {\name} explicitly models the human bias via an additional hidden reward function in the self-play training objective. 
Solutions to such a generalized formulation are capable of representing any non-adaptive human strategies. We further present a tractable approximation of the hidden reward function space and perform a random search over this approximated space when building the policy pool in the first stage. Hence, the enhanced pool can capture a wide range of possible human biases beyond  conventions~\citep{hu2020other,zhao2021maximum} and skill-levels~\citep{dafoe2021cooperative} w.r.t. the game reward. Accordingly, the final adaptive policy derived in the second phase can have a much stronger adaptation capability to unseen humans. 

We evaluate {\name} in a popular human-AI cooperation benchmark, \emph{Overcooked}~\citep{carroll2019utility}, which is a fully observable two-player cooperative game.
We conduct comprehensive ablation studies and comparisons with baselines that do not explicitly model human biases. 
Empirical results show that {\name} achieves superior performances when cooperating with behavior models learned from human data. 
In addition, we also consider a collection of manually scripted biased strategies, which are ensured to be sufficiently distinct from the policy pool, and {\name} produces an even larger performance improvement over the baselines.
Finally, we conduct real human studies. Collected feedbacks show that the human participants consistently feel that the agent trained by {\name} is much more assistive than the baselines.
We emphasize that, in addition to algorithmic contributions, our empirical analysis, which considers learned models, script policies and real humans as diverse testing partners,  also provides a more thorough evaluation standard for learning human-assistive AIs.
\vspace{-4mm}
\section{Related Work}
\vspace{-2mm}
There is a broad literature on improving the zero-shot generalization ability of MARL agents to unseen partners~\citep{kirk2021survey}. 
Particularly for cooperative games, this problem is often called \emph{ad hoc team play}~\citep{stone2010ad} or \emph{zero-shot cooperation} (ZSC)~\citep{hu2020other}.
Since most existing methods are based on self-play~\citep{rashid2018qmix,yu2021surprising},
how to avoid convention overfitting becomes a critical challenge in ZSC. 
Representative works include improved policy representation~\citep{zhang2020multi,chen2020aateam}, randomization over invariant game structures~\citep{hu2020other,treutlein2021new}, population-based training~\citep{Long*2020Evolutionary,Lowe*2020On,cui2021k} and belief modeling for partial observable settings~\citep{hu2021off,xie2021learning}.
\emph{Fictitious co-play} (FCP)~\citep{strouse2021collaborating} proposes a two-stage framework by first creating a pool of self-play policies and their previous versions and then training an adaptive policy against them.
Some techniques improves the diversity of the policy pool~\citep{garnelo2021pick,liu2021towards,zhao2021maximum,lupu2021trajectory} for a stronger adaptive policy~\citep{knott2021evaluating}.
 
We follow the FCP framework and augment the policy pool with biased strategies.
Notably, techniques for learning a robust policy in competitive games, such as policy ensemble~\citep{lowe2017multi}, adversarial training~\citep{li2019robust} and double oracle~\citep{lanctot2017unified}, are complementary to our focus. 

Building AIs that can cooperate with humans remains a fundamental challenge in AI~\citep{dafoe2021cooperative}. 
A critical issue is that humans can be systematically biased~\citep{camerer2011behavioral,russell2019human}. Hence, great efforts have been made to model human biases, 
such as irrationality~\citep{selten1990bounded,bobu2020less,laidlaw2022the}, risk aversion~\citep{pratt1978risk,barberis2013thirty}, and myopia~\citep{evans2016learning}. Many popular models further assume humans have hidden subject utility functions~\citep{nguyen2013analyzing,hadfield2016cooperative,eckersley2019impossibility,shah2019feasibility}. Conventional methods for human-AI collaboration require an accurate behavior model over human data~\citep{ajoudani2018progress,kwon2020humans,kress2021formalizing,wang2022co}, while we consider the setting of no human data. 
Hence, we explicitly model human biases as a hidden utility function in the self-play objective to reflect possible human biases beyond conventions w.r.t. optimal rewards. We prove that such a hidden-utility model can represent any strategy of non-adaptive humans.  
Notably, it is also feasible to generalize our model to capture higher cognitive hierarchies~\citep{camerer2004cognitive}, which we leave as a future direction. 

We approximate the reward space by a linear function space over event-based features. Such a linear representation is typical in inverse reinforcement learning~\citep{ng2000algorithms}, policy transfer~\citep{barreto2017successor}, evolution computing~\citep{cully2015robots} and game theory~\citep{winterfeldt1975multi,kiekintveld2013security}. Event-based rewards are also widely adopted as a general design principle in robot learning~\citep{fu2018variational,zhu2019ingredients,ahn2022can}.
We perform randomization over feature weights to produce diverse biased strategies. Similar ideas 
have been adopted in other settings, such as generating adversaries~\citep{paruchuri2006security}, emergent team-formation~\citep{baker2020emergent}, and searching for diverse Nash equilibria in general-sum games~\citep{tang2020discovering}.
In our implementation, we use multi-reward signals as an approximate metric to filter out duplicated policies, which is inspired by the quality diversity method~\citep{pugh2016quality}.
There are also some works utilizing model-based methods to solve zero-shot cooperation~\cite{wu2021too}. Their focus is orthogonal to our approach since they focus more on constructing an adaptive agent, while our approach aims to find more diverse strategies. Besides, we adopt an end-to-end fashion to train an adaptive agent, which is more general.
Lastly, our final adaptive agent assumes a zero-shot setting without any data from its testing partner. This can be further extended by allowing meta-adaptation at test time~\citep{charakorn2021learning,gupta2021dynamic,nekoei2021continuous}, which we leave as a future direction. 

\vspace{-2mm}
\section{Preliminary}
\vspace{-2mm}
{\bf Two-Player Human-AI Cooperative Game:} A human-AI cooperative game is defined on a world model, i.e., a two-player Markov decision process denoted by $M=\langle \mathcal S, \mathcal A, P, R\rangle$, with one player with policy $\pi_A$ being an AI and the other with policy $\pi_H$ being a human. $\mathcal S$ is a set of world states. $\mathcal A$ is a set of possible actions for each player. $P$ is a transition function over states given the actions from both players. $R$ is a global reward function.
A policy $\pi_{i}$ produces an action $a^{(i)}_t\in\mathcal{A}$ given a world state $s_t\in\mathcal{S}$ at the time step $t$.
We use the expected discounted return $J(\pi_A,\pi_H)=\mathbb E_{s_t,a_t^{(i)}}\left[\sum_t\gamma^t R(s_t,a_t^{(A)},a_t^{(H)})\right]$ as the objective. 
Note that $J(\pi_H,\pi_A)$ can be similarly defined, and we use $J(\pi_A,\pi_H)$ for conciseness without loss of generality. 
Let $P_H:\Pi\rightarrow [0,1]$ be the unknown distribution of human policies. 
The goal is to find a policy $\pi_A$ that maximizes the expected return with an unknown human, i.e., $\mathbb E_{\pi_H\sim P_H}[J(\pi_H, \pi_A)]$\label{eq:true-obj}. 
In practice, many works construct or learn a policy distribution $\hat P_H$ to approximate real-world human behaviors, leading to an approximated objective for $\pi_A$, i.e., $\mathbb E_{\hat{\pi}_H\sim \hat P_H}[J(\pi_A,\hat{\pi}_H)]$. 

\textbf{Self-Play for Human-AI Cooperation:} Self-play (SP) optimizes $J(\pi_1,\pi_2)$ with two parametric policies $\pi_1$ and $\pi_2$ and takes $\pi_1$ as $\pi_A$ without use of human data.
However, SP suffers from poor generalization since SP converges to a specific optimum and overfits the resulting behavior convention. 
Population-based training (PBT) improves SP by representing $\pi_i$ as a mixture of $K$ individual policies $\{\pi_i^{(k)}\}_{k=1}^K$ and runs \emph{cross-play} between policies by optimizing the expected return~\citep{Long*2020Evolutionary,Lowe*2020On,cui2021k}.
PBT can be further improved by adding a diversity bonus over the population~\citep{garnelo2021pick,liu2021towards,lupu2021trajectory}. 

\textbf{Fictitious Co-Play (FCP):} FCP~\citep{strouse2021collaborating} is a recent work on zero-shot human-AI cooperation with strong empirical performances. FCP extends PBT via a two-stage framework. 
In the first stage, FCP trains $K$ individual policy pairs $\{(\pi_1^{(k)},\pi_2^{(k)})\}_{k=1}^K$ by optimizing $J(\pi_1^{(k)},\pi_2^{(k)})$ for each $k$. 
Each policy pair $(\pi_1^{(k)},\pi_2^{(k)})$ may quickly converge to a distinct local optimum. 
Then FCP constructs a policy pool $\Pi_2=\{\tilde{\pi}_2^{(k)},\pi^{(k)}_2\}_{k=1}^K$ with two past versions of each converged SP policy $\pi^{(k)}_2$, denoted by $\tilde{\pi}_2^{(k)}$.
In the second stage, FCP constructs a human proxy  distribution $\hat P_H$ by randomly sampling from $\Pi_2$ and trains $\pi_A$ by optimizing $\mathbb E_{\hat{\pi}_H\sim \hat P_H}[J(\pi_A,\hat{\pi}_H)]$. 
We remark that, for a better cooperation, the adaptive policy $\pi_A$ should condition on the state-action history in an episode to infer the intention of its partner.
Individual SP policies ensure $\hat P_H$ contains diverse conventions while using past versions enables $\hat P_H$ to cover different skill levels. So, the final policy $\pi_A$ can be forced to adapt to humans with unknown conventions or sub-optimalities.
Maximum Entropy Population-based Training (MEP)~\citep{zhao2021maximum} is the latest variant of FCP, which adopts the population entropy as a diversity bonus in the first stage to improve the generalization of the learned $\pi_A$.

\vspace{-2mm}
\section{Cooperating with Humans in \emph{Overcooked}: A Motivating Example}
\vspace{-2mm}
\label{sec:case-study}

\textbf{Overcooked Game:} \emph{Overcooked}~\citep{carroll2019utility} is a fully observable two-player cooperative game developed as a testbed for human-AI cooperation. In Overcooked, players cooperatively accomplish different soup orders and serve the soups for rewards. Basic game items include onions, tomatoes, and dishes. 
An agent can move in the game or ``interact'' to trigger some events, such as grabbing/putting an item, serving soup, etc., depending on the game state.  
To finish an order, players should put a proper amount of ingredients into the pot and cook for some time. Once a soup is finished, players should pick up the soup with a dish and serve it to get a reward. 
Different orders have different cooking times and different rewards. 
Fig.~\ref{fig:layouts} demonstrates five layouts we consider, where the first three onion-only layouts are adopted from \citep{carroll2019utility}, 
while the latter two, \emph{Distant Tomato} and \emph{Many Orders}, are newly introduced to include tomato orders to make the problem more challenging: an AI needs to carefully adapt its behavior to either cook onions or tomatoes according to the other player's actions. 

\begin{figure}[bt]
\vspace{-13mm}
	\centering
	{\centering
        {\includegraphics[width=1\textwidth]{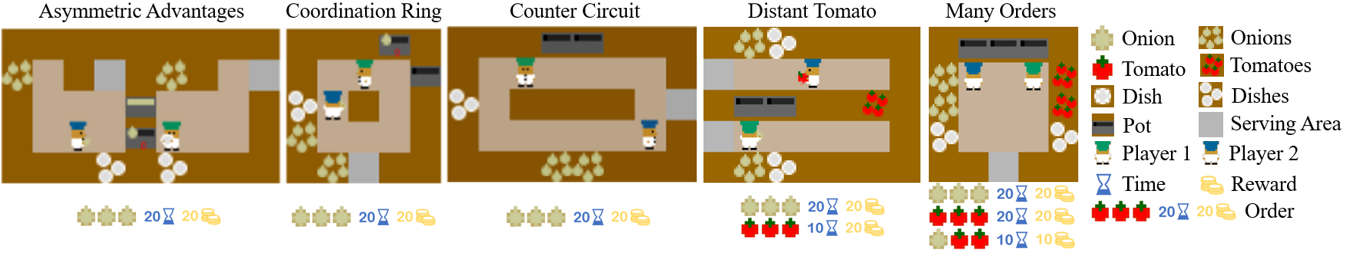}
    	}
    }
	\centering 
	\vspace{-6mm}
	\caption{\textbf{Layouts in Overcooked.} From left to right are \emph{Asymmetric Advantages}, \emph{Coordination Ring}, \emph{Counter Circuit}, \emph{Distant Tomato} and \emph{Many Orders} respectively, with orders shown below. }
\label{fig:layouts}
\vspace{-5mm}
\end{figure}

\textbf{A Concrete Example of Human Preference:} Fig.~\ref{fig:fcp-far_tomato} illustrates a motivating example in \emph{Distant Tomato} (the 4th layout in Fig.~\ref{fig:layouts}). There are two orders: one requires three onions, and the other requires three tomatoes. We run FCP on this multi-order scenario, and all the policies in the FCP policy pool converge to the specific pattern of only cooking onion soup (Fig.~\ref{fig:fcp-far_tomato-a}). Hence, the final adaptive policy by FCP only learns to grab onions and cook onion soups.
Cooking tomato soup is a sub-optimal strategy that requires many extra moves, so the onion-only policy pool is exactly the solution to the FCP self-play objective under the environment reward. 
However, it is particularly reasonable for a human to dislike onions and accordingly only grab tomatoes in a game.
To be an assistive AI, the policy should adapt its strategy to follow the human preference for tomatoes. On the contrary, as shown in Fig.~\ref{fig:fcp-far_tomato-b}, the FCP policy completely ignores human moves for tomatoes and even results in poor cooperation by producing valueless wrong orders of mixed onions and tomatoes. 
Thus, to make an FCP agent human-assistive, the first-stage policy pool should not only contain optimal strategies (i.e., onion soups) of different conventions but also cover diverse human preferences (e.g., tomatoes) even if these preferences are sub-optimal under the environment reward.

\begin{figure}[h]
\vspace{-6mm}
	\centering
	
    \subfloat[\label{fig:fcp-far_tomato-a}FCP-FCP]{
		\includegraphics[height=1.8cm]{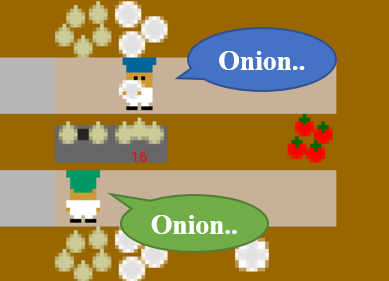}
		}
    \subfloat[\label{fig:fcp-far_tomato-b}FCP's failure case when cooperating with a human player]{
		\includegraphics[height=1.8cm]{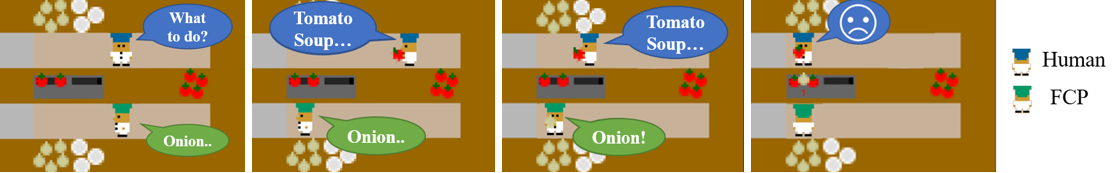}
		}
	\centering 
	\vspace{-2mm}
	\caption{Motivating example. (a) FCP converges to the optimal onion soup strategy. (b) A failure case of FCP with a human partner: FCP agent corrupts the human's plan of cooking tomato soups.}
    \label{fig:fcp-far_tomato}
\vspace{-3mm}
\end{figure}
\vspace{-2mm}
\section{Methodology}
\label{sec:method}
\vspace{-2mm}

We introduce a general formulation to model human preferences and develop a tractable learning objective (Sec.~\ref{sec:reward-space}). The algorithm, \emph{Hidden-Utility Self-Play} (\name), is summarized in Sec.~\ref{sec:hsp}.

\subsection{Hidden-Utility Markov Game}
\label{sec:method-formulation}

The key insight from Sec.~\ref{sec:case-study} is that humans may not truthfully behave under the environment reward.  
Instead, humans are biased and driven by their own utility functions, which are formulated below. 

\textbf{Definition:} A two-player \emph{hidden utility Markov game} is defined as $\langle\mathcal S,\mathcal A, P, R_w, R_t\rangle$. $\langle\mathcal S,\mathcal A, P, R_t\rangle$ corresponds to the original game MDP with $R_t$ being the task reward function. $R_w$ denotes an \emph{additional} hidden reward function. 
There are two players, $\pi_a$, whose goal is to maximize the task reward $R_t$, and $\pi_w$, whose goal is to maximize the hidden reward $R_w$. $R_w$  is only visible to $\pi_w$.

Let $J(\pi_1,\pi_2|R)$ denote the expected return under reward $R$ with a policy  $\pi_1$ and $\pi_2$.  During self-play, $\pi_a$ optimizes $J(\pi_a,\pi_w|R_t)$ while $\pi_w$ optimizes $J(\pi_a, \pi_w|R_w)$.
A solution policy profile $(\pi_a^*,\pi_w^*)$ to the hidden utility Markov game is now defined by a Nash equilibrium (NE): $J(\pi_a^*, \pi_w^*|R_w)\ge J(\pi_a^*, \pi_w'|R_w),\forall \pi_w'$ and $J(\pi_a^*,\pi_w^*|R_t)\ge J(\pi_a',\pi_w^*|R_t),\forall \pi_a'$.$\hfill \square$

Intuitively, with a suitable hidden reward function $R_w$, we can obtain any possible (non-adaptive and consistent) human policy by solving the hidden-utility game induced by $R_w$.
\begin{lemma}
    \label{theorem:hidden-reward}
    Given an MDP $M=\langle \mathcal S,\mathcal A,P,R_t\rangle$, for any policy $\pi:\mathcal S\times \mathcal A\rightarrow [0,1]$, there exists a hidden reward function $R_w$ such that the two-player hidden utility Markov game $M'=\langle\mathcal S,\mathcal A,P,R_w,R_t\rangle$ has a Nash equilibrium $(\pi_a^*,\pi_w^*)$ where $\pi_w^*=\pi$.
\end{lemma}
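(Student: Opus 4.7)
The plan is to construct an explicit hidden reward $R_w$ that makes the target policy $\pi$ a best response for player $w$ against \emph{every} strategy of player $a$, and then pick $\pi_a^*$ as an optimal response to $\pi_w^*=\pi$ under the task reward $R_t$. The two Nash equilibrium inequalities then follow immediately from these two choices.

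For the first step I would take $R_w$ to depend only on the current state and $w$'s own action, and to be maximal exactly on the support of $\pi(\cdot\mid s)$; a concrete choice is $R_w(s,a_a,a_w)=\mathbb{1}[\pi(a_w\mid s)>0]$. Because $\sum_{a}\pi(a\mid s)=1$ at every state, $\mathrm{supp}(\pi(\cdot\mid s))$ is nonempty, so player $w$ can always collect the maximal per-step reward $1$ by picking any action in the support, regardless of what $\pi_a$ does. Consequently, any policy $\pi_w'$ whose action support is contained in $\mathrm{supp}(\pi(\cdot\mid s))$ at each reachable state attains the maximal discounted return $\tfrac{1}{1-\gamma}$; in particular $\pi$ itself attains this bound, which gives $J(\pi_a,\pi\mid R_w)\ge J(\pi_a,\pi_w'\mid R_w)$ for every $\pi_a$ and every $\pi_w'$.

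For the second step, fix $\pi_w^*=\pi$. The $R_t$-optimization for player $a$ reduces to a standard single-agent MDP whose transitions and rewards are marginalized over $a_w\sim\pi(\cdot\mid s)$. Classical MDP theory guarantees an optimal policy $\pi_a^*$ in this induced MDP, which by definition satisfies $J(\pi_a^*,\pi\mid R_t)\ge J(\pi_a',\pi\mid R_t)$ for every $\pi_a'$. Combining the two steps yields both NE inequalities, so $(\pi_a^*,\pi)$ is a Nash equilibrium of $M'$ with $\pi_w^*=\pi$.

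The one subtlety I expect is handling stochastic $\pi$: in a standard MDP only deterministic policies need be optimal, so one might worry that no Markov reward can make a specific stochastic distribution the unique optimum. The construction above sidesteps this by flattening $R_w$ to a constant on the entire support of $\pi(\cdot\mid s)$, rendering every in-support policy equally optimal. This is sufficient because the NE definition only asks $\pi$ to attain the best-response value, not to be the unique best response.
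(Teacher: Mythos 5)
Your proof is correct for the lemma as stated, but it takes a genuinely different route from the paper's. The paper proves the lemma inside a maximum-entropy RL framework: it adds an entropy bonus $\alpha\mathcal H(\pi_i(\cdot|s_t))$ to each player's objective and then constructs $R_w$ by inverting the soft Bellman equations, i.e.\ it sets $R_w(s,a)=\alpha\log\bigl(\pi(a|s)/\pi(\pi^*(s)|s)\bigr)-\gamma\,\mathbb E_{s'}[V(s')]+V^*(s)$ with a free choice of $b(s)=R_w(s,\pi^*(s))$, so that $\pi(\cdot|s)\propto\exp(Q(s,\cdot)/\alpha)$ and the given stochastic policy is the \emph{unique} soft-optimal best response. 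Your support-indicator reward $R_w(s,a_a,a_w)=\mathbb 1[\pi(a_w|s)>0]$ is more elementary, works directly with the un-regularized return that the main text's NE definition literally uses, and correctly handles stochastic $\pi$ by making the whole support indifferent --- which, as you note, is all the NE definition demands. What the paper's heavier construction buys is twofold: it pins down the exact mixing probabilities of $\pi$ rather than just its support, and the freedom in $b$ yields infinitely many \emph{distinct} reward functions realizing the same policy. The latter is used explicitly in the proof of Theorem 5.1, where the paper needs an injective assignment of reward functions to the (finitely many) policies in a discretized policy class; your construction maps all policies with the same per-state support to the same $R_w$, so it would not substitute there without modification, even though it fully suffices for the lemma itself.
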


Lemma \ref{theorem:hidden-reward} connects any human behavior to a hidden reward function. 
Then the objective of the adaptive policy $\pi_A$ in  Eq.~(\ref{eq:true-obj}) can be formulated under the hidden reward function space  $\mathcal R$ as follows.

\begin{theorem}
    \label{theorem:hidden-reward-space}

    For any $\epsilon >0$, there exists a mapping $\tilde{\pi}_w$ where $\tilde{\pi}_w(R_w)$ denotes the derived policy $\pi_w^*$ in the NE of the hidden utility Markov game $M_w=\langle\mathcal S,\mathcal A, P, R_w, R_t\rangle$ induced by $R_w$,
    and a distribution $P_R:\mathcal R\rightarrow [0, 1]$ over the hidden reward space $\mathcal R$, such that, for any adaptive policy $\pi_A\in \arg\max_{\pi'} \mathbb E_{R_w\sim P_R}[J(\pi', \tilde \pi_w(R_w))]$, $\pi_A$ approximately maximizes the ground-truth objective with at most an $\epsilon$ gap, i.e., $\mathbb E_{\pi_H\sim P_H}[J(\pi_A,\pi_H)]\ge \max_{\pi'}\mathbb E_{\pi_H\sim P_H}[J(\pi',\pi_H)]-\epsilon$.
\end{theorem}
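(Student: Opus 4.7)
The plan is to reduce the claim to Lemma~\ref{theorem:hidden-reward} by constructing $P_R$ as a pushforward of $P_H$ under an ``encoding'' map that sends each human policy to a reward function realising it. First I would invoke Lemma~\ref{theorem:hidden-reward} pointwise: for each $\pi_H$ in $\mathrm{supp}(P_H)$, choose some $\phi(\pi_H)\in\mathcal{R}$ such that $\pi_H$ arises as the $\pi_w^*$ coordinate of an NE of the hidden-utility game induced by $\phi(\pi_H)$. This yields a (possibly set-valued) map $\phi:\mathrm{supp}(P_H)\to\mathcal{R}$ from which a genuine selection must be extracted.

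Next, I would define $\tilde{\pi}_w$ by $\tilde{\pi}_w(R)=\pi$ whenever $R=\phi(\pi)$, and extend it arbitrarily off the image of $\phi$; off-image values cannot affect the objective because they carry zero mass under the pushforward. Setting $P_R:=\phi_*P_H$, a change of variables gives, for every candidate $\pi'$,
\[
\mathbb{E}_{R_w\sim P_R}\!\bigl[J(\pi',\tilde{\pi}_w(R_w))\bigr]
\;=\;\mathbb{E}_{\pi_H\sim P_H}\!\bigl[J(\pi',\tilde{\pi}_w(\phi(\pi_H)))\bigr]
\;=\;\mathbb{E}_{\pi_H\sim P_H}\!\bigl[J(\pi',\pi_H)\bigr].
\]
Hence the two optimisation problems have identical values and identical argmax sets, so any $\pi_A$ maximising the approximated objective also maximises the ground-truth one, and in the ideal case $\epsilon$ can be taken to be $0$.

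The main obstacle is turning the existence statement of Lemma~\ref{theorem:hidden-reward} into a well-defined, measurable selection $\phi$, so that $P_R$ is actually a probability measure on $\mathcal{R}$. If $\mathrm{supp}(P_H)$ is uncountable one needs a measurable-selection theorem (e.g.\ Kuratowski--Ryll-Nardzewski applied to the NE correspondence $R_w\mapsto\{\pi_w^*\}$), which is standard but requires mild regularity of the NE correspondence; alternatively one can restrict $\phi$ to a countable dense subset of policies and use continuity of $J(\pi',\cdot)$ in its second argument to produce an approximate construction whose error is controlled by the density of the subset. It is precisely this approximation slack that shows up as $\epsilon$ in the theorem: one selects $\phi$ on an $\epsilon$-net of $\mathrm{supp}(P_H)$ and bounds the discrepancy between the exact and approximate objectives by $\epsilon$, which then transfers to the final inequality via the triangle inequality applied to $\max_{\pi'}\mathbb{E}_{\pi_H\sim P_H}[J(\pi',\pi_H)]$.
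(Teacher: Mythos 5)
Your proposal is essentially the paper's argument: the ``alternative'' you sketch in the final paragraph --- selecting $\phi$ on a finite discretization of the policy space and absorbing the approximation error into $\epsilon$ --- is exactly what the paper does. It builds a finite grid $\Pi_K$ of policies, maps each $\pi_H$ to its nearest grid point at a cost of $\frac{2\delta R_{\max}}{(1-\gamma)^2}$ in expected return (with $\delta=|\mathcal A|/K$), assigns each grid policy a \emph{distinct} hidden reward via Lemma~\ref{theorem:hidden-reward} (distinctness comes for free from the arbitrary function $b$ in that construction, which disposes of the injectivity needed for $\tilde\pi_w$ to be well defined), and pushes the discretized $P_H$ forward onto this finite set of rewards, yielding the theorem with $K\ge\frac{4|\mathcal A|R_{\max}}{\epsilon(1-\gamma)^2}$. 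Your primary route claiming $\epsilon=0$ would indeed require a measurable \emph{and injective} selection over a possibly uncountable support, which is precisely the obstruction you identify and which the paper's finite discretization sidesteps.
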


Theorem \ref{theorem:hidden-reward-space} indicates that it is possible to derive diverse human behaviors by properly designing a hidden reward distribution $\hat P_R$, which can have a much lower intrinsic dimension than the policy distribution.
In \emph{Overcooked}, human preferences can be typically described by a few features, such as interaction with objects or certain type of game events, like finishing an order or delivering a soup.
By properly approximating the hidden reward distribution as $\hat P_R$, the learning objective becomes,
\begin{align}\label{eq:hidden-reward-obj}
    \pi_A=\arg\max_{\pi'}\mathbb E_{R_w\sim \hat P_R}[J(\pi', \tilde \pi_w(R_w))] 
\end{align}
Eq.~(\ref{eq:hidden-reward-obj}) naturally suggests a two-staged solution by first constructing a policy pool $\{\tilde{\pi}_w(R):R\sim \hat{P}_R\}$ from $\hat{P}_R$ and then training $\pi_A$ to maximize the game reward w.r.t. the induced pool.

\subsection{Construct a Policy Pool of Diverse Preferences}
\label{sec:reward-space}

{\bf Event-based Reward Function Space: } 
The fundamental question is how to design a proper hidden reward function space $\mathcal R$. In general, a valid reward space is intractably large. Inspired by the fact that human preferences are often event-centric, we formulate $\mathcal{R}$ as linear functions over event features, 
namely $\mathcal R=\{R_w:R_w(s, a_1, a_2)=\phi(s, a_1, a_2)^Tw, ||w||_{\infty}\le C_{\max}\}$.  $C_{\max}$ is a bound on the feature weight $w$ while $\phi:\mathcal S\times \mathcal A\times\mathcal A\rightarrow \mathbb R^{m}$ specifies occurrences of different game events when taking joint action $(a_1,a_2)$ at state $s$. 

{\bf Derive a Diverse Set of Biased Policies:} We simply perform a random search over the feature weight $w$ to derive a set of diverse behaviors. 
We first draw $N$ samples $\{w^{(i)}\}_{i\in[N]}$ for the feature weight $w$ where $w^{(i)}_j$ is sampled uniformly from a set of values $C_j$, 
leading to a set of hidden reward functions $\{R_w^{(i)}:R_w^{(i)}(s,a_1,a_2)=\phi(s,a_1,a_2)^Tw^{(i)}\}_{i\in[N]}$. For each hidden reward function $R_w^{(i)}$, we find an approximated NE, $\pi_w^{(i)}, \pi_a^{(i)}$, of the hidden utility Markov game induced by $R_w^{(i)}$ through self-play. 
The above process produces a policy pool $\{\pi_w^{(i)}\}_{i\in [N]}$ that can cover a wide range of behavior preferences.

\begin{wrapfigure}{R}{0.42\textwidth}
\vspace{-5mm}
\small{
\begin{algorithm}[H]
\caption{Greedy Policy Selection}\label{algo:greedy}
$S\leftarrow \{i_0\}$ where $i_0\sim[N]$\;
\For{$i=1\rightarrow K-1$}{
    $k'\leftarrow \arg\max_{k'\notin S}\text{ED}(S\cup\{k'\})$\;
    $S\leftarrow S\cup\{k'\}$\;
}
\end{algorithm}}
\vspace{-5mm}
\end{wrapfigure}

\textbf{Policy Filtering:} We notice that the derived pool often contains a lot of similar policies. 
This is because the same policy can be optimal under a set of reward functions, which is typical in multi-objective optimization~\citep{chugh2019survey, tabatabaei2015survey}. 
Duplicated policies simply slow down training without any help to learn $\pi_A$. 
For more efficient training, we adopt a behavior metric, i.e., \emph{event-based diversity}, to only keep distinct ones from the initial pool. 
For each biased policy $\pi_w^{(i)}$, let $\text{EC}^{(i)}$ denote the expected event count, i.e. $\mathbb E[\sum_{t=1}^T \phi(s_t,a_t)|\pi_w^{(i)},\pi_a^{(i)}]$. 
We define event-based diversity for a subset $S\subseteq [N]$ by normalized pairwise EC differences, i.e., $\text{ED}(S)=\sum_{i,j\in S}\sum_k c_k\cdot |\text{EC}_k^{(i)} - \text{EC}_k^{(j)}|$, where $c_k$ is a frequency normalization constant. 
Finding a subset $S^*$ of size $K$ with the optimal $\text{ED}$ can be expensive. We simply adopt a greedy method in Algo.~\ref{algo:greedy} to select policies incrementally.

\begin{wrapfigure}{r}{0.435\textwidth}
\vspace{-15mm}
\small{
\begin{algorithm}[H]
\caption{Hidden-Utility Self-Play}\label{algo:random}
\For{$i=1\rightarrow N$}{
    Train $\pi_w^{(i)}$ and $\pi_a^{(i)}$ under sampled $R_w^{(i)}$\;
}
Run Algo.~\ref{algo:greedy} to only keep $K$ policies\;
Initial policy $\pi_A$\;
\Repeat{enough iterations}{
    Rollout with $\pi_A$ and sampled $\pi_w^{(i)}$\;
    Update $\pi_A$\;
}
\end{algorithm}}
\vspace{-10mm}
\end{wrapfigure}

\subsection{Hidden-Utility Self-Play}
\label{sec:hsp}
Given the filtered policy pool, we train the final adaptive policy $\pi_A$ over rollout games by $\pi_A$ and randomly sampled policies from the pool, which completes our overall algorithm {\name} in Algo.~\ref{algo:random}.

We implement {\name} using MAPPO~\citep{yu2021surprising} as the RL algorithm.
In the first stage, we use MLP policies for fast SP convergence.
In practice, we use half of the policy pool to train biased policies and the other half to train MEP policies~\citep{zhao2021maximum} under the game reward.
This increases the overall pool towards the game reward, leading to improved empirical performances. 
For the final adaptive training, as suggested in \citep{tang2020discovering}, we add the identity of each biased policy as an additional feature to the critic. 
For event-based features for the reward space, we consider event types, including interactions with basic items and events causing non-zero rewards in \emph{Overcooked}. 
Full implementation details can be found in Appendix \ref{sec:app-implement} and \ref{sec:app-training}.
\vspace{-2mm}

\section{Experiments}
\vspace{-2mm}
{\bf Baselines.} We compare {\name} with other SP-based baselines, including Fictitious Co-Play (FCP), Maximum Entropy Population-based training (MEP), and Trajectory Diversity-based PBT (TrajDiv). All methods follow a two-stage framework with a final pool size of $36$, which we empirically verified to be sufficiently large to avoid performance degradation for all methods. More analysis on pool size can be found in Appendix \ref{sec:app-poolsize}. The implementation details of baselines can be found in Appendix \ref{sec:app-baselines}.
Each policy is trained for 100M timesteps for convergence over 5 random seeds.
Full training details with hyper-parameter settings can be found in Appendix \ref{sec:app-hyperparameters}.

{\bf Evaluation.}  
We aim to examine whether {\name} can cooperate well with (1) \emph{learned human models}, (2) \emph{scripted} policies with strong preferences, and (3) \emph{real humans}.
We use both game reward and human feedback as evaluation metrics. We remark that since a biased human player may play a sub-optimal strategy, the game reward may not fully reflect the performance gap between the baselines and {\name}. Our goal is to ensure the learned policy is effective for biased partners/humans. Therefore, we consider human feedback as the fundamental metric. Ablation studies are also performed to investigate the impact of our design choices in {\name}. In tables, maximum returns or comparable returns within a threshold of 5 are marked in bold. Full results can be found in Appendix \ref{sec:app-results}.
\vspace{-2mm}
\subsection{Cooperation with Learned Human Models in Onion-Only Layouts}
\vspace{-2mm}

For evaluation with learned human models, we adopted the models provided by \citep{carroll2019utility}, which \emph{only support onion-only layouts}, including Asymm. Adv., Coord. Ring and Counter Circ.. The results are shown in Tab.~\ref{tab:human-proxy}. For a fair comparison, we reimplement all the baselines, labeled \emph{MEP, FCP, and TrajDiv}, with the same training steps and policy pool size as {\name}. We additionally take the best performance ever reported in the existing literature, labeled \emph{Existing SOTA} in Tab.~\ref{tab:human-proxy}. Our implementation achieves substantially higher scores than Existing SOTA when evaluated with the same human proxy models. {\name} further outperforms other reimplementations in Asymm. Adv. and is comparable with the best baseline in the rest. Full results of the evaluation with learned human models can be found in Appendix \ref{sec:app-results-learned}. We emphasize that the improvement is marginal because the learned human models have limited representation power to imitate natural human behaviors, which typically cover many behavior modalities. Fig.\ref{fig:human-imitate-traj} in Appendix \ref{sec:app-evidence} shows trajectories induced by the learned human models only cover a narrow subspace of trajectories played by human players. Further analysis of the learned human models can be found in Appendix \ref{sec:app-evidence}. Furthermore, our implementation of baselines achieves substantially better results than the original papers~\citep{carroll2019utility,zhao2021maximum}, which also makes the improvement margin smaller.

\begin{table}[ht]
\vspace{-4mm}
\centering
\begin{tabular}{ccccc} 
\toprule
 & Pos. & Asymm. Adv.           & Coord. Ring          & Counter Circ.          \\ 
\midrule
\multirow{2}{*}{Existing SOTA}     & 1    & 141.1\scriptsize{(12.5)}          & 92.7\scriptsize{(7.4)}          & 54.5\scriptsize{(2.3)}            \\
                                     & 2    & 84.6\scriptsize{(16.3)}          & 107.3\scriptsize{(6.4)}          & 55.8\scriptsize{(3.6)}            \\ 
\hline
\multirow{2}{*}{FCP}     & 1    & 282.8\scriptsize{(9.4)}          & \textbf{161.3}\scriptsize{(1.6)}          & 95.9\scriptsize{(2.0)}            \\
                                     & 2    & 203.8\scriptsize{(8.2)}          & \textbf{161.0}\scriptsize{(2.7)}          & 92.7\scriptsize{(1.3)}            \\ 
\hline
\multirow{2}{*}{MEP}     & 1    & 291.7\scriptsize{(4.6)}          & \textbf{161.8}\scriptsize{(0.7)} & \textbf{108.8}\scriptsize{(4.2)}              \\
                                     & 2    & 203.4\scriptsize{(2.0)}          & \textbf{164.2}\scriptsize{(2.1)} & \textbf{111.1}\scriptsize{(0.7)}              \\ 
\hline
\multirow{2}{*}{TrajDiv} & 1    & 289.3\scriptsize{(8.8)}          & 150.8\scriptsize{(3.1)}          & 60.1\scriptsize{(5.0)}              \\
                                     & 2    & 194.2\scriptsize{(0.7)}          & 142.1\scriptsize{(2.3)}          & 53.7\scriptsize{(12.4)}              \\ 
\hline
\multirow{2}{*}{HSP}     & 1    & \textbf{300.3}\scriptsize{(2.2)} & \textbf{160.0}\scriptsize{(2.6)}          & \textbf{107.4}\scriptsize{(3.5)}  \\
                                     & 2    & \textbf{217.1}\scriptsize{(3.3)} & \textbf{160.6}\scriptsize{(3.3)}          & \textbf{106.6}\scriptsize{(3.0)}           \\
\bottomrule
\end{tabular}
\vspace{-2mm}
\caption{Comparison of average episode reward and standard deviation when cooperating with learned human models. The Pos. column indicates the roles played by AI policies. \textit{Existing SOTA} is the best performance ever reported in the existing literature. {\name} achieves substantially higher scores than Existing SOTA. And {\name} further outperforms other methods in Asymm. Adv. and is comparable with the best baseline in the rest.}
\label{tab:human-proxy}
\vspace{-6mm}
\end{table}

\vspace{-2mm}
\subsection{Ablation Studies}
\vspace{-2mm}
\begin{wrapfigure}{r}{0.4\textwidth}
	\centering
	\vspace{-3mm}
	\hspace{-2mm}
		\includegraphics[width=0.4\textwidth]{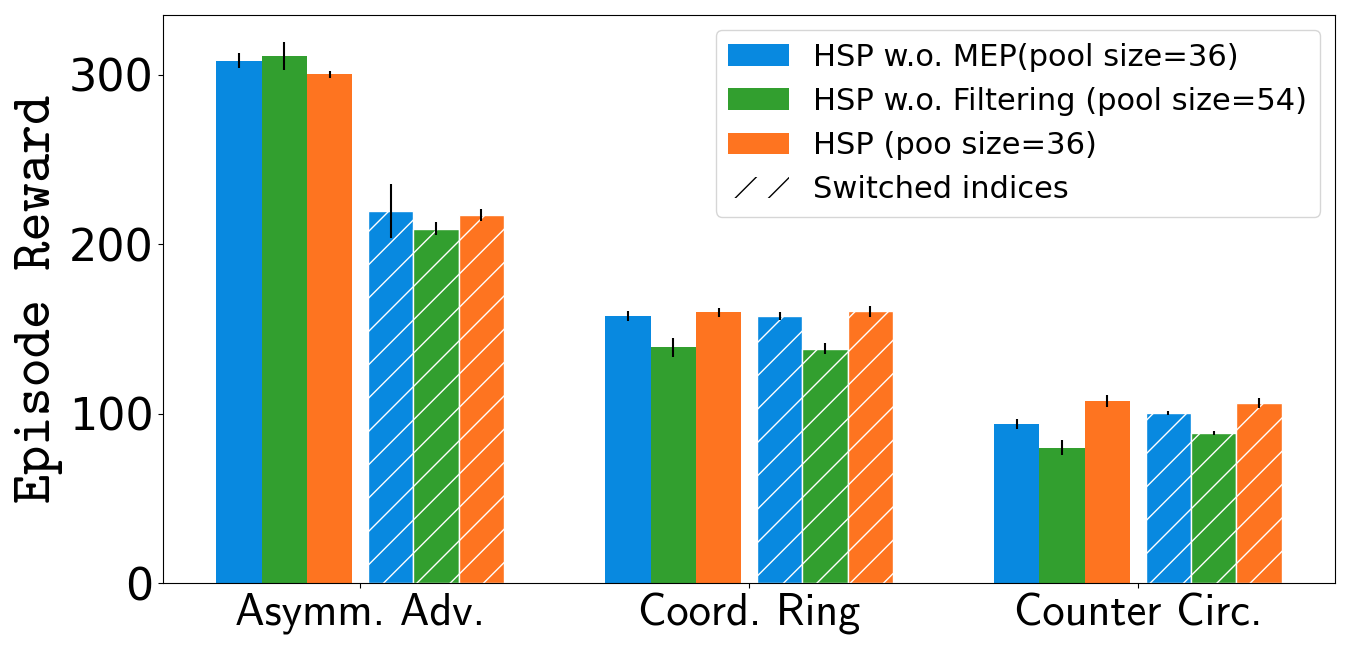}
	\centering 
	\vspace{-2mm}
        \caption{Performance of different pool construction strategies. Results suggest that it is beneficial to incorporate MEP policies and filter duplicated policies.}
	\vspace{-6mm}
    \label{fig:human-proxy-ablation}
\end{wrapfigure}
We investigate the impact of our design choices, including the construction of the final policy pool and the batch size for training the adaptive policy. 

\textbf{Policy Pool Construction:} {\name} has two techniques for the policy pool, i.e., (1) policy filtering to remove duplicated biased policies and (2) the use of MEP policies under the game reward for half of the pool size. 
We measure the performance with human proxies by turning these options off. For ``\emph{HSP w.o. Filtering}'', we keep all policies by random search in the policy pool, resulting in a larger pool size of 54 (18 MEP policies and a total of 36 random search ones). For``\emph{HSP w.o. MEP}'', we exclude MEP policies from the policy pool and keep all biased policies without filtering, which leads to the same pool size of 36. The results are shown in Fig.~\ref{fig:human-proxy-ablation} and the detailed numbers can be found in Appendix \ref{sec:app-poolconstruction}.
By excluding MEP policies, the HSP variant (\emph{HSP w.o. MEP}) performs worse in the more complicated layout Counter Circ. while remaining comparable in the other two simpler ones. 
So we suggest including a few MEP policies when possible.
With policy filtering turned off, even though the policy pool size grows, the performance significantly decays in both Coord. Ring and Counter Circ. layouts, suggests that duplicated biased policies can hurt policy generalization.

\textbf{Batch Size:} We measure the training curves of the final adaptive policy under the game reward using different numbers of parallel rollout threads in MAPPO. More parallel threads indicate a larger batch size. 
The results in all five layouts are reported in Fig.~\ref{fig:parallel}.
In general, we observe that a larger batch size often leads to better training performance. In particular, when the batch size is small, i.e., using 50 or 100 parallel threads, training becomes significantly unstable and even breaks in three layouts. 
Note that the biased policies in the {\name} policy pool have particularly diverse behaviors, which cause a high policy gradient variance when training the final adaptive policy. Therefore, a sufficiently large training batch size can be critical to stable optimization. We adopt 300 parallel threads in all our experiments for a fair comparison. 
\begin{figure}[H]
\vspace{-4mm}
	\centering
	{	\includegraphics[width=\linewidth]{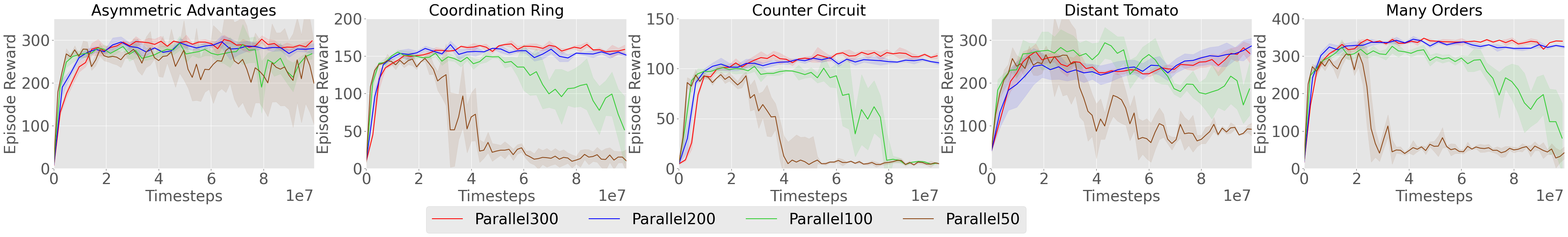}
		}
	\centering 
	\vspace{-5mm}
	\caption{Average game reward by using different numbers of parallel rollout threads in MAPPO to train the final adaptive policy. More parallel threads imply a larger training batch size.}
    \label{fig:parallel}
\vspace{-5mm}
\end{figure}

\textbf{Practical Remark:} Overall, we suggest using a pool size of 36 and including a few MEP policies for the best empirical performance. Besides, a sufficiently large training batch size can help stable optimization, and we use the same batch size for all methods for a fair comparison.

\subsection{Cooperation with Scripted Policies with Strong Behavior Preferences}

We empirically notice that human models learned by imitating the entire human trajectories cannot well capture a wide range of behavior modalities. 
So, we manually designed a set of script policies to encode some particular human preferences: 
\emph{Onion/Tomato Placement}, which continuously places onions or tomatoes into the pot,
\emph{Onion/Dish Everywhere}, which keeps putting onions or dishes on the counters,
\emph{Tomato/Onion Placement and Delivery}, which puts tomatoes/onions into the pot in half of the time and tries to deliver soup in the other half of the time. For a fair comparison, we ensure that \emph{all scripted policies are strictly different from the {\name} policy pool}. More details about scripted policies and a full evaluation can be found in Appendix \ref{sec:app-script}.

We remark that scripted policies are only used for evaluation but not for training {\name}. Tab.~\ref{tab:Average-reward-script} shows the average game reward of all the methods when paired with scripted policies, where {\name} significantly outperforms all baselines.
In particular, in \emph{Distant Tomato}, when cooperating with a strong tomato preference policy ({Tomato Placement}), {\name} achieves a $10\times$ higher score than other baselines, 
suggesting that the tomato-preferred behavior is well captured by {\name}.
\vspace{-2mm}
\begin{table}[ht]
\vspace{-13mm}
\centering
\begin{tabular}{cccccc} 
\toprule
                             & Scripts     & FCP                 & MEP                  & TrajDiv     & HSP                   \\ 
\midrule
\multirow{2}{*}{Asymm. Adv.}   & Onion Placement   & 334.8\scriptsize{(13.0)}         & 330.5\scriptsize{(14.2)}          & 323.6\scriptsize{(17.0)} & \textbf{376.8}\scriptsize{(9.9)}   \\
                             & Onion Place.\&Delivery  & \textbf{297.7}\scriptsize{(3.4)}          & \textbf{298.5}\scriptsize{(3.4)}           & \textbf{290.0}\scriptsize{(4.7)}  & \textbf{300.1}\scriptsize{(4.1)}   \\ 
\hline
\multirow{2}{*}{Coord. Ring}  & Onion Everywhere & 109.1\scriptsize{(7.9)}          & \textbf{124.0}\scriptsize{(3.4)}  & 116.9\scriptsize{(8.9)}  & \textbf{121.2}\scriptsize{(12.6)}           \\
                             & Dish Everywhere & 94.4\scriptsize{(3.8)}           & 100.2\scriptsize{(5.3)}           & 107.3\scriptsize{(5.3)}  & \textbf{115.4}\scriptsize{(7.4)}   \\ 
\hline
\multirow{2}{*}{Counter Circ.} & Onion Everywhere & 63.7\scriptsize{(9.2)}           & 88.9\scriptsize{(5.1)}            & 82.0\scriptsize{(12.8)}  & \textbf{107.5}\scriptsize{(3.5)}   \\
                             & Dish Everywhere & 57.0\scriptsize{(5.3)}           & 53.0\scriptsize{(1.8)}            & 57.2\scriptsize{(2.2)}   & \textbf{78.5}\scriptsize{(4.1)}    \\ 
\hline
\multirow{2}{*}{Distant Tomato}  & Tomato Placement  & 15.6\scriptsize{(5.2)}           & 20.1\scriptsize{(10.6)}           & 23.3\scriptsize{(9.5)}   & \textbf{277.9}\scriptsize{(14.3)}  \\
                             & Tomato Place.\&Delivery & 177.9\scriptsize{(6.1)}          & 180.4\scriptsize{(8.7)}           & 164.8\scriptsize{(19.6)} & \textbf{234.6}\scriptsize{(15.1)}  \\ 
\hline
\multirow{2}{*}{Many Orders}   & Tomato Placement  & 282.6\scriptsize{(16.2)}         & 225.8\scriptsize{(60.8)}          & 259.2\scriptsize{(7.9)}  & \textbf{317.8}\scriptsize{(9.3)}   \\
                             & Tomato Place.\&Delivery & \textbf{329.1}\scriptsize{(5.3)} & \textbf{328.1}\scriptsize{(12.6)} & 295.7\scriptsize{(2.4)}  & \textbf{324.5}\scriptsize{(3.9)}   \\
\bottomrule
\end{tabular}
\vspace{-2mm}
\caption{Average episode reward and standard deviation with unseen testing scripted policies. {\name} significantly outperforms all baselines.}
\vspace{-5mm}
\label{tab:Average-reward-script}
\end{table}
\vspace{-2mm}
\subsection{Cooperation with Human Participants}
\vspace{-1mm}
We recruited 60 volunteers (28.6\% female, 71.4\% male, age between 18–30) by posting the experiment advertisement on a public platform and divided them into 5 groups for 5 layouts. They are provided with a detailed introduction to the basic gameplay and the experiment process. Volunteers are fully aware of all their rights and experiments are approved with the permission of the department. A detailed description of the human study can be found in Appendix \ref{sec:app-human}.
\begin{wrapfigure}{r}{0.28\textwidth}
	\centering
	\vspace{-4mm}
	\hspace{-2mm}
		\includegraphics[width=0.25\textwidth]{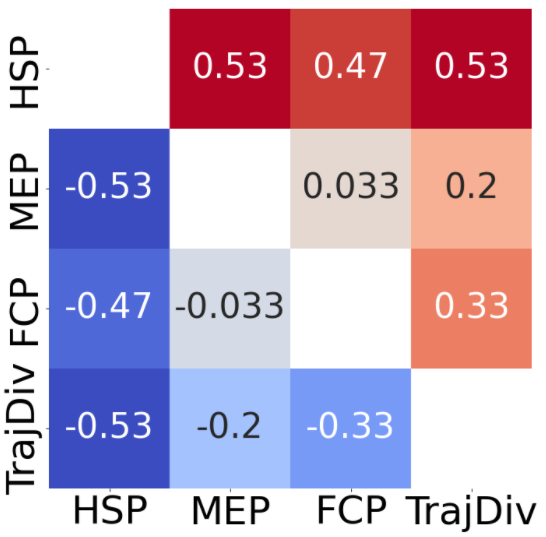}
	\centering 
	\vspace{-2mm}
        \caption{Human preference in the warm-up stage. The unit denotes the difference between the percentage of human players who prefer row partners over column partners and human players who prefer column partners over row partners. {\name} is consistently preferred by human participants with a clear margin.}
        \label{fig:human-rank}
	\vspace{-15mm}
\end{wrapfigure}
The experiment has two stages:
\begin{enumerate}[nolistsep,leftmargin=*]
    \item[$\bullet$] \textbf{Warm-up Stage:} Participants could play the game freely to explore possible AI behaviors. They are asked to rank AI policies according to the degree of assistance during free plays.
    \item[$\bullet$] \textbf{Exploitation Stage:} Participants are instructed to achieve a score as high as possible.
\end{enumerate} 
We note that our user study design differs from that of the original Overcooked paper~\citep{carroll2019utility}.
The additional warm-up stage allows for diverse human behaviors under any possible preference, suggesting a strong testbed for human-assistive AIs.

\subsubsection{Results of the Warm-up Stage}

The warm-up stage is designed to test the performance of AI policies in the face of diverse human preferences. Fig.~\ref{fig:human-rank} visualizes the human preference for different methods reported in the warm-up stage. The unit represents the difference between the percentage of human players who prefer row partners over column partners and human players who prefer column partners over row partners. The detailed calculation method can be found in Appendix \ref{sec:app-humanrank}. {\name} is preferred by humans with a clear margin. Since humans can freely explore any possible behavior, the results in Fig.~\ref{fig:human-rank} imply the strong generalization capability of {\name}. We also summarize feedback from human participants in Appendix \ref{sec:app-humanfeedback}.

\vspace{-2mm}
\subsubsection{Results of the Exploitation Stage} 
The exploitation stage is designed to test the scoring capability of different AIs. Note that it is possible that a human player simply adapts to the AI strategy when instructed to have high scores. So, in addition to final rewards, we also examine the emergent human-AI behaviors to measure the human-AI cooperation level. The experiment layouts can be classified into two categories according to whether the layout allows diverse behavior modes. The first category contains simple \emph{onion-only} layouts that are taken from \citep{carroll2019utility}, including Asymm. Adv., Coord. Ring and Counter Circ.. The second category contains newly introduced layouts with both onions and tomatoes, Distant Tomato and Many Orders, which allow for a much wider range of behavior modes.

\textbf{Onion-only Layouts}:
Fig.~\ref{fig:onion-only} shows the average reward in onion-only layouts for different methods when paired with humans. Among these onion-only layouts, all methods have comparable episode reward in simpler ones (Asymm. Adv. and Coord. Ring), while {\name} is significantly better in the most complex Counter Circ. layout. Fig.~\ref{fig:real-human-random3} shows the frequency of successful onion passing between the human player and the AI player. The learned {\name} policy is able to use the middle counter for passing onions, while the baseline policies are less capable of this strategy.

\textbf{Layouts with Both Onions and Tomatoes:} The results and behavior analysis in Distant Tomato and Many Orders are shown as follows,
\begin{figure}[ht]
\vspace{-13mm}
	\centering
 	\subfloat[\label{fig:onion-only}Onion-Only Layouts]{
 		\includegraphics[height=3.6cm]{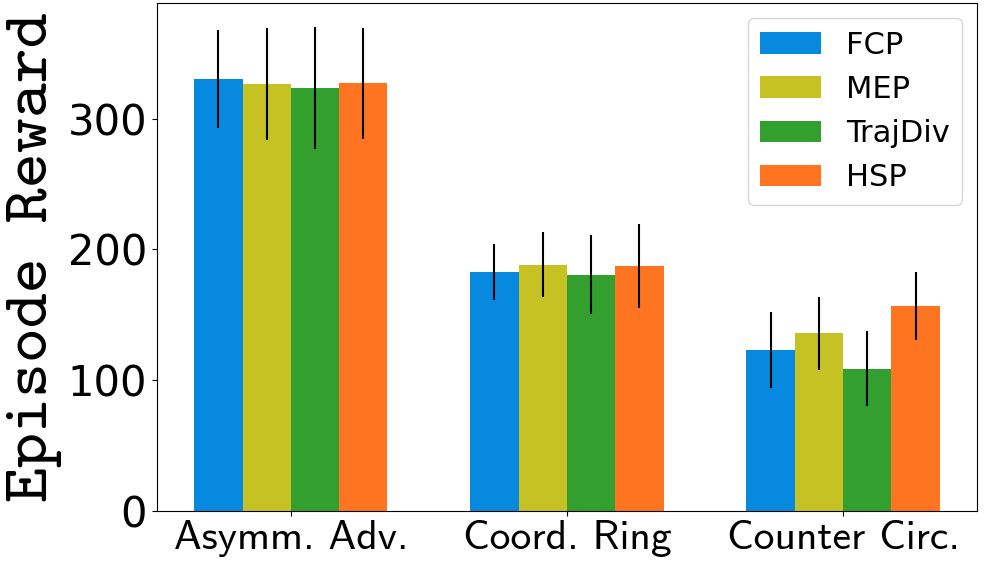}
 		}
    \subfloat[\label{fig:real-human-random3}Counter Circ.]{
		\includegraphics[height=3.6cm]{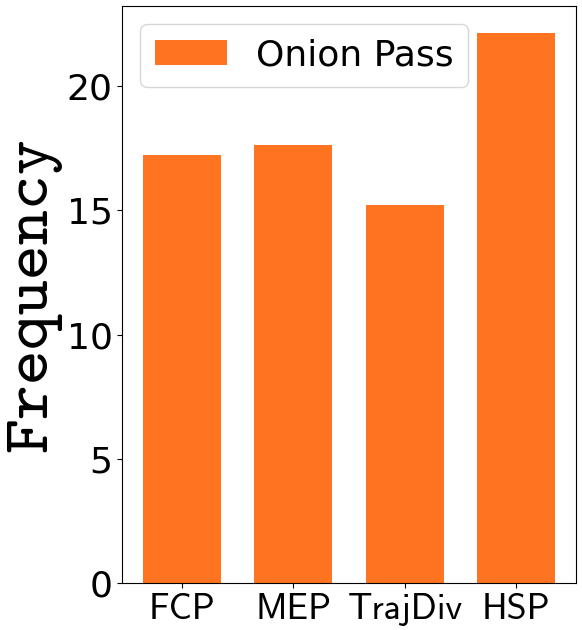}
		}
	\centering 
	\vspace{-2mm}
	\caption{
	(a) Average episode reward in onion-only layouts of different methods when paired with humans in the exploitation stage. {\name} has comparable performance with the baselines in Asymm. Adv. and Coord. Ring, and is significantly better in the most complex Counter Circ. layout. (b) The onion passing frequency in {Counter Circ.} shows that {\name} is the most capable, among other baselines, of passing onions via the counter, suggesting better capabilities to assist humans.}
    \label{fig:real-human}
\vspace{-3mm}
\end{figure}

\begin{enumerate}[nolistsep,leftmargin=*]
    \item[$\bullet$] \textbf{Distant Tomato:} In Distant Tomato, the optimal strategy is always cooking onion soups, while it is suboptimal to cook tomato soups due to the much more time spent on moving. Interestingly, our human-AI experiments found that humans may have diverse biases over onions and tomatoes.
    However, all learned baseline policies tend to have a strong bias towards onions and often place onions into a pot with tomatoes in it already.
    Tab.~\ref{tab:distant-tomato} reports the average number of such \emph{Wrong Placements} made by different AI players. {\name} makes the lowest number of wrong placements and is the only method that can correctly place additional tomatoes into a pot partially filled with tomatoes, labeled \emph{Correct Placements}. This suggests that {\name} is the only effective method to cooperate with biased human strategies, e.g., preferring tomatoes. 
    In addition, as shown in Tab.~\ref{tab:distant-tomato}, even when humans play the optimal strategy of cooking onion soups, {\name} still achieves comparable performance with other methods.

    \item[$\bullet$] \textbf{Many Orders:} In Many Orders, an effective strategy is to utilize all three pots to cook soups. Our experiments found that baseline policies tend to ignore the middle pot. Tab.~\ref{tab:many-orders} shows the average number of soups picked up from the middle pot by different AI players. The learned {\name} policy is much more active in taking soups from the middle pot, leading to more soup deliveries. Furthermore, {\name} achieves a substantially higher episode reward than other methods, as shown in Tab.~\ref{tab:many-orders}.
    
\end{enumerate}

\begin{table}[ht]
\vspace{-2mm}
\centering
\begin{tabular}{ccccc}
\toprule
              & FCP  & MEP  & TrajDiv & HSP  \\ 
\midrule
Onion-Preferred Episode Reward $\uparrow$ & \textbf{343.65} & 325.08 & 334.73 & \textbf{340.3} \\
\hline
Wrong Placements $\downarrow$ & 0.37 & 0.41 & 0.38 & {\textbf{0.21}} \\
Correct Placements $\uparrow$ & 0.0 & 0.0 & 0.0 & {\textbf{1.41}} \\
\bottomrule
\end{tabular}
\vspace{-2mm}
\caption{Average onion-preferred episode reward and frequency of different emergent behaviors in {Distant Tomato} during the exploitation stage. \emph{Onion-Preferred Episode Reward} is the average episode reward when humans prefer onions. \emph{Wrong Placements} and \emph{Correct Placements} are the average numbers of wrong and correct placements into a pot partially filled with tomatoes. 
{\name} makes the lowest number of wrong placements and is the only method that can place tomatoes correctly, suggesting that {\name} is effective at cooperating with biased human strategies.}
\label{tab:distant-tomato}
\end{table}
\begin{table}[ht]
\centering
\vspace{-3mm}
\begin{tabular}{ccccc}
\toprule
              & FCP  & MEP  & TrajDiv & HSP  \\ 
\midrule
Episode Reward $\uparrow$ & 316.81 & 320.61 & 323.52 & {\textbf{382.52}} \\
Number of Soups Picked Up from the Middle Pot $\uparrow$ & 1.93 & 2.03 & 1.33 & {\textbf{5.64}} \\
\bottomrule
\end{tabular}
\vspace{-2mm}
\caption{Average episode reward and average number of picked-up soups from the middle pot by different AI players in Many Orders during the exploitation stage. {\name} achieves significantly better performance and is much more active in taking soups from the middle pot than baselines.}
\label{tab:many-orders}
\vspace{-4mm}
\end{table}

\vspace{-2mm}
\section{Conclusion}
\vspace{-2mm}
We developed \emph{\fullname} (\name) to tackle the problem of zero-shot human-AI cooperation by explicitly modeling human biases as an additional reward function in self-play.
{\name} first generates a pool of diverse strategies and then trains an adaptive policy accordingly.
Experiments verified that agents trained by {\name} are more assistive for humans than baselines in \emph{Overcooked}.
Although our work suggests a new research direction on this fundamentally challenging problem, there are still limitations to be addressed. 
{\name} requires domain knowledge to design a suitable set of events. There exists some work on learning reward functions rather than assuming event-based rewards~\citep{shah2019feasibility, zhou2021continuously}. So a future direction is to utilize learning-based methods to design rewards automatically.
Another major limitation is the computation needed to obtain a diverse policy pool. Possible solutions include fast policy transfer and leveraging a prior distribution of reward functions extracted from human data \citep{NIPS2017_350db081}. Learning and inferring the policy representations of partners could also provide further improvement. 
We leave these issues as our future work.

\clearpage
\subsection*{Acknowledgments}

This research was supported by National Natural Science Foundation of China (No.U19B2019, 62203257, M-0248), Tsinghua University Initiative Scientific Research Program, Tsinghua-Meituan Joint Institute for Digital Life, Beijing National Research Center for Information Science, Technology (BNRist), and Beijing Innovation Center for Future Chips and 2030 Innovation Megaprojects of China (Programme on New Generation Artificial Intelligence) Grant No. 2021AAA0150000.
\bibliography{iclr2023_conference}
\bibliographystyle{iclr2023_conference}

\clearpage
\appendix
We would suggest visiting \url{https://sites.google.com/view/hsp-iclr} for more information.

\section{Theorem Proofs}

For simplicity, we assume state space and action space in our analysis are both discrete and finite, which is exactly the case for \emph{Overcooked}, and the rewards $r$ are bounded: $|r(s,a)|\le R_{\max},\forall s\in\mathcal S,a\in\mathcal A$. 

{\bf Lemma 5.1.}  \emph{\label{lemma1}
    Given an MDP $M=\langle \mathcal S,\mathcal A,P,R_t\rangle$, for any policy $\pi_w:\mathcal S\times \mathcal A\rightarrow [0,1]$, there exists a hidden reward function $R_w$ such that the two-player hidden utility Markov game $M'=\langle\mathcal S,\mathcal A,P,R_w,R_t\rangle$ has a Nash equilibrium $(\pi_a^*,\pi_w^*)$ where $\pi_w^*=\pi_w$.
}

\begin{proof}
Our analysis is based on the maximum entropy reinforcement learning framework~\citep{haarnoja2018soft, ziebart2008maximum, wulfmeier2015maximum}. Given a reward function $R$ and policies of the two players $\pi_1$ and $\pi_2$, we consider following maximum entropy RL objective for policy $\pi_i(1\le i\le 2)$,
$$
J_i(\pi_1,\pi_2|R)=\mathbb E_{\tau}\left[ \sum_t \gamma^t(R(s_t,a_t^{(1)},a_t^{(2)})+\alpha \mathcal H(\pi_i(\cdot|s_t)))\Big|a_t^{(i)}\sim \pi_i(\cdot|s_t)\right]
$$

We shall first constructs $\pi_a$ given policy $\pi_w$ to satisfy $J_2(\pi_w,\pi_a|R_t)\ge J_2(\pi_w,\pi_a'|R_t),\forall \pi_a'$ and secondly constructs $R_w$ such that $J_1(\pi_w,\pi_a|R_w)\ge J_1(\pi_w',\pi_a|R_w),\forall \pi_w'$ is satisfied.

{\bf Step 1: } Construct $\pi_a$ given $\pi_w$. 

Given $\pi_w$, let $\pi_a\in\arg\max_{\pi}J_2(\pi_w,\pi|R_t)$.

{\bf Step 2: } Construct $R_w$ such that $J_1(\pi_w,\pi_a|R_w)\ge J_1(\pi_w',\pi_a|R_w),\forall \pi_w'$ is satisfied given $\pi_w$ and $\pi_a$.

Given a fixed partner $\pi_a$, by regarding $\pi_a$ as part of the environment dynamics, we could consider the dynamics for $\pi_w$ in a single-agent MDP $M'=\langle \mathcal S, \mathcal A, P', R_w,\gamma\rangle$ where $\mathcal S$ is the state space, $\mathcal A$ is the action space, $ P'$ denotes the transition probability and $R_w$ is the reward function to construct. More specifically, $ P'$ is defined as, 
\begin{align}
     P'(s'|s, a) = \sum_{\tilde a}  P(s'|s, a, \tilde a) \cdot \pi_a(\tilde a|s)
\end{align}

In $M'$, given reward $R_w$, the objective of $\pi_w$ becomes,
\begin{align}
    \max_{\pi}\mathbb E_{\tau}\left[\sum_t \gamma^t (R_w(s_t, a_t)+\alpha \mathcal H(\pi(s_t)))\Big| a_t\sim \pi(s_t)\right]
\end{align}

The value function and the Q function could be defined as,
\begin{align}
    V(s)&=\mathbb E_{\tau}\left[\sum_t \gamma^t (R_w(s_t, a_t)+\alpha \mathcal H(\pi_w(s_t)))\Big| a_t\sim \pi_w(s_t),s_0=s\right]\label{eq:V}\\
    &= \sum_a \pi_w(a|s)(R_w(s, a) + \gamma \mathbb E_{s'}[V(s')|s, a]) + \alpha \mathcal H(\pi_w(s))\\
    Q(s,a)&=R_w(s,a) + \gamma \cdot \mathbb E_{s'}[V(s')|s, a]\label{eq:Q}
\end{align}

It is sufficient to construct $R_w$ such that $V(s)$ is a stable point of the \emph{Bellman backup operator}~\citep{sutton2018reinforcement} $\mathcal T^*$ under some $R_w$:
\begin{align}
    (\mathcal T^* V)(s)=\max_{d:\sum_a d(a)=1}\alpha \mathcal H(d) + \sum_a d(a)( R_w(s, a) + \gamma \mathbb E_{s'}[V(s')|s, a])\label{eq:bellman-opt}
\end{align}

Now we assume $V(s)$ is a stable point for Eq.~\ref{eq:bellman-opt} and construct $R_w$. For all $s\in\mathcal S$, $\pi_w(\cdot|s)$ should be a solution to the following maximization problem,

\begin{align}
    \max_{d}&\;\alpha \mathcal H(d) + \sum_a d(a)Q(s, a)\label{prob:maxQ}\\
    s.t.&\;\;\sum_a d(a)=1
\end{align}

Applying KKT conditions over the above optimization problem indicates that,
\begin{align}
\pi_w(\cdot|s)\propto \exp(Q(s, \cdot)/\alpha),\forall s\label{eq:piproptoQ}
\end{align}

Let $\pi^*_w(s)=\arg\max_a \pi_w(a|s), V^*(s)=\max_a Q(s, a), A(s, a)=Q(s,a) - V^*(s)$. By Eq.~\ref{eq:piproptoQ},  we also have
\begin{align}
    A(s, a)=\alpha(\log \pi_w(a|s) - \log\pi_w(\pi_w^*(s)|s))\label{eq:adv}
\end{align}

By definition of value function $V(s)$, 
\begin{align}
    V(s)&=\sum_a \pi_w(a|s)Q(s, a) + \alpha \mathcal H(\pi_w(s))\\
    &=\sum_a \pi_w(a|s)(A(s, a) + V^*(s)) + \alpha \mathcal H(\pi_w(s)) \\
    &=\sum_a \pi_w(a|s)A(s, a) + V^*(s) + \alpha \mathcal H(\pi_w(s)) \\
    &=\sum_a \pi_w(a|s)A(s, a) + R_w(s, \pi_w^*(s)) + \gamma \mathbb E_{s'}[V(s')|s'\sim P'(s, \pi^*_w(s))] + \alpha \mathcal H(\pi_w(s))\\
    &=\mathbb E_{\tau}\left[\sum_t\gamma^t\left(\sum_{a'}\pi_w(a'|s)A(s, a') + R_w(s_t, a_t)+\alpha \mathcal H(\pi_w(s_t))\right)\Big| a_t=\pi^*_w(s_t)\right]
\end{align}

Let $b(s)=R_w(s, \pi^*_w(s))$. Then $V(s)$ is determined given $\pi_w$ and $b$,
\begin{align}\label{eq:value}
    V(s)=\mathbb E_{\tau}\left[\sum_t\gamma^t\left(\sum_{a'}\pi_w(a'|s)A(s, a') + b(s_t)+\alpha \mathcal H(\pi_w(s_t))\right)\Big| a_t=\pi^*_w(s_t)\right]
\end{align}

By $A(s, a)=\alpha(\log \pi_w(a|s) - \log\pi_w(\pi_w^*(s)|s))=Q(s, a) - V^*(s)$, 
\begin{align}
    \alpha(\log \pi_w(a|s) - \log\pi_w(\pi_w^*(s)|s))&=R_w(s, a) + \gamma \mathbb E_{s'}[V(s')|s'\sim P'(s, a)] - V^*(s)\\
   R_w(s, a)&= \alpha\log\left(\frac{\pi_w(a|s)}{\pi_w(\pi_w^*(s)|s)}\right) - \gamma \mathbb E_{s'}[V(s')|s'\sim P'(s, a)] + V^*(s)\label{eq:rw}
\end{align}

To summarize, for policy $\pi_w$, we can construct a valid hidden reward function $R_w$ via following process,
\begin{enumerate}
    \item Choose a function $b:\mathcal S'\rightarrow \mathbb R$.
    \item Compute $A(s,a)$ by Eq.~\ref{eq:adv}.
    \item Compute $V(s)$ and $V^*(s)$ by Eq.~\ref{eq:value}.
    \item Construct $R_w(s, a)$ by $R_w(s,\pi^*_w(s))=b(s)$ and Eq.~\ref{eq:rw}.
\end{enumerate}
\end{proof}

Now we show that, for any $b:\mathcal S'\rightarrow \mathbb R$, under $R_w$ constructed by the above process, $V(s)$ is a stable point of the Bellman backup operator $\mathcal T^*$. This is straightforward. First, constructed $R_w$ ensures that $\alpha(\log \pi_w(a|s) - \log\pi_w(\pi_w^*(s)|s))=Q(s, a) - V^*(s)$ (Eq.~\ref{eq:rw}) and therefore $\pi_w(a|w)\propto \exp(Q(s, a)/\alpha)$, which means $\pi_w$ is a solution for the maximization problem \ref{prob:maxQ}. So
\begin{align}
    (\mathcal T^*V)(s)&=\max_{d}\alpha \mathcal H(d) + \sum_a d(a)( R_w(s, a) + \gamma \mathbb E_{s'}[V(s')]) \\
    &=\sum_a \pi_w(a|s) Q(s, a) + \alpha \mathcal H(\pi_w(s))=V(s).
\end{align}

{\bf Theorem 5.1.}  \emph{\label{theorem1}
   For any $\epsilon >0$, there exists a mapping $\tilde{\pi}_w$ where $\tilde{\pi}_w(R_w)$ denotes the derived policy $\pi_w^*$ in the NE of the hidden utility Markov game $M_w=\langle\mathcal S,\mathcal A, P, R_w, R_t\rangle$ induced by $R_w$,
    and a distribution $P_R:\mathcal R\rightarrow [0, 1]$ over the hidden reward space $\mathcal R$, such that, for any adaptive policy $\pi_A\in \arg\max_{\pi'} \mathbb E_{R_w\sim P_R}[J(\pi', \tilde \pi_w(R_w))]$, $\pi_A$ approximately maximizes the ground-truth objective with at most an $\epsilon$ gap, i.e., $\mathbb E_{\pi_H\sim P_H}[J(\pi_A,\pi_H)]\ge \max_{\pi'}\mathbb E_{\pi_H\sim P_H}[J(\pi',\pi_H)]-\epsilon$.
}

\begin{proof}

Let $K(K>|\mathcal A|)$ be a large positive integer. We construct a discretization of the policy space $\Pi$ by $\Pi_K=\{\pi:\pi(a|s)=\frac{i}{K}\text{ where }i\in [K], \forall s\in\mathcal S,a\in\mathcal A\text{ and }\sum_a \pi(a|s)=1,\forall s\in \mathcal S\}$. Note that $\Pi_K$ is finite, i.e. $|\Pi_K|\le \left(K+1\right)^{|\mathcal S|\cdot|\mathcal A|}$. Let $M=|\Pi_K|$ and $\pi_1,\pi_2,\cdots,\pi_M$ be an ordering of the policies in $\Pi_K$. For simplicity of notation, let $\delta=\frac{|\mathcal A|}{K}$.

Given the discretization $\Pi_K$, it's straightforward to specify the nearest policy $\hat \pi\in \Pi_K$ for any policy $\pi\in \Pi$. Formally, for any policy $\pi\in \Pi$, let $G(\pi)=\arg\min_{i=1,\dots,M}\sum_{s,a}|\pi(a|s)-\pi_i(a|s)|$. An obvious property of $G$ is that, $\forall s\in\mathcal S$, $||\pi(\cdot|s)-G(\pi)(\cdot|s)||_\infty\le \frac{|\mathcal A|}{K}=\delta$.

For two policies $\pi_1$ and $\pi_2$, consider $\pi_1$ playing with $\pi_2$ and $G(\pi_2)$ respectively. Since the action distribution of $\pi_2$ and $G(\pi_2)$ at each state differ at most $\delta$, we have follows,
\begin{align}
    |J(\pi_1,\pi_2)-J(\pi_1, G(\pi_2))|\le\sum_t \gamma^t\cdot(1-\delta)^t\cdot \delta\cdot\frac{2R_{\max}}{1-\gamma}\le \frac{2\delta R_{\max}}{(1-\gamma)^2}\label{neq21}
\end{align}

We can then derive a discretized approximation of the ground-truth policy distribution $P_H$ as follows,
\begin{align}
    \hat P_H(\pi)=\text{Pr}_{\pi'\sim P_H}[\pi=G(\pi')]
\end{align}

We could show that the difference between the objective under the ground-truth policy distribution $P_H$ and that under the approximated policy distribution $\hat P_H$ is bounded. By Eq.~\ref{neq21}, for any adaptive policy $\pi_A$,
\begin{align}
    \big|\mathbb E_{\pi_H\sim \hat P_H}[J(\pi_A,\pi_H)]-\mathbb E_{\pi_H\sim P_H} [J(\pi_A,\pi_H)]\big| &= \big|\mathbb E_{\pi_H\sim P_H}[J(\pi_A,G(\pi_H))-J(\pi_A,\pi_H)]\big| \\
    & \le  \frac{2\delta R_{\max}}{(1-\gamma)^2}
\end{align}

On the other hand, 
consider following an iterative process to find hidden reward functions for policies in $\Pi_K$. For $i=1..M$, we find hidden reward function $R_w^{(i)}$ where $R_w^{(i)}\notin\{R_w^{(j)}|1\le j\le i-1\}$ and $R_w^{(i)}$ could be constructed from $\pi_i$ as in Lemma 5.1. Notice that, by construction rule in Lemma 5.1, such $R_w^{(i)}$ must exists since we can specify arbitrary $b:\mathcal S\rightarrow \mathbb R$. 

Let $\tilde \pi_w(R_w^{(i)})=\pi_i,\forall i=1\dots M$ and the hidden reward distribution $P_R$ be $P_R(R_w^{(i)})=\hat P_H(\pi_i),\forall i=1\cdots M$. We immediately see that, for any adaptive policy $\pi_A$, the objective is equivalent under the approximated policy distribution $\hat P_H$ and hidden reward function distribution $P_R$,
\begin{align}
    \mathbb E_{R_w\sim P_R}[J(\pi_A,\tilde \pi_w(R_w))]=\mathbb E_{\pi_H\sim \hat P_H}[J(\pi_A, \pi_H)]
\end{align}

Finally, for any adaptive policy $\pi_A\in\arg\max_{\pi'}\mathbb E_{R_w\sim P_R}[J(\pi', \tilde \pi_w(R_w))]$ and any policy $\pi'\in \Pi$,

\begin{align}
    \mathbb E_{\pi_H\sim P_H}[J(\pi_A,\pi_H)]&\ge \mathbb E_{\pi_H\sim \hat P_H}[J(\pi_A, \pi_H)]-\frac{2\delta R_{\max}}{(1-\gamma)^2}\\
    &=\mathbb E_{R_w\sim P_R}[J(\pi_A,\tilde \pi_w(R_w))]-\frac{2\delta R_{\max}}{(1-\gamma)^2}\\
    &\ge \mathbb E_{R_w\sim P_R}[J(\pi',\tilde \pi_w(R_w))]-\frac{2\delta R_{\max}}{(1-\gamma)^2}\\
    &=\mathbb E_{\pi_H\sim \hat P_H}[J(\pi', \pi_H)]-\frac{2\delta R_{\max}}{(1-\gamma)^2}\\
    &\ge \mathbb E_{\pi_H\sim P_H}[J(\pi', \pi_H)]-\frac{4\delta R_{\max}}{(1-\gamma)^2}
\end{align}

Let $K\ge\frac{4|\mathcal A|R_{\max}}{\epsilon(1-\gamma)^2}$ and we have $\mathbb E_{\pi_H\sim P_H}[J(\pi_A,\pi_H)]\ge \max_{\pi'}\mathbb E_{\pi_H\sim P_H}[J(\pi',\pi_H)]-\epsilon$.

\end{proof}

\section{Environment Details}

\begin{figure}[htbp]
    \centering
    \includegraphics[width=\textwidth]{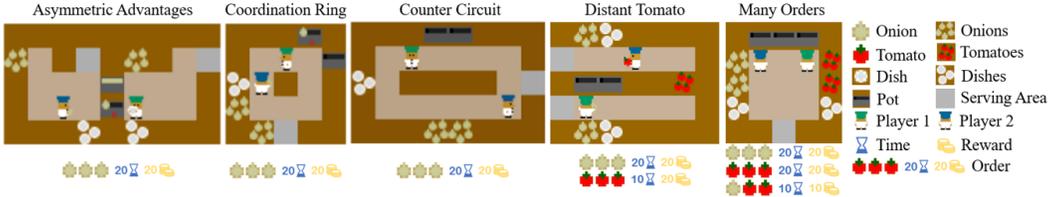}
    \caption{All 5 layouts used in our work  (from left to right): \emph{Asymmetric Advantage}, \emph{Coordination Ring}, \emph{Counter Circuit}, \emph{Distant Tomato}, and \emph{Many Orders}, each featuring specific cooperation patterns we want to study.}
    \label{fig:all_layouts}
\end{figure}

\subsection{Description}

The Overcooked Environment, first introduced in \citep{carroll2019utility}, is based on the popular video game Overcooked where multiple players cooperate to finish as many orders as possible within a time limit. In this simplified version of the original game, two chiefs, each controlled by a player (either human or AI), work in grid-like \emph{layouts}. Chiefs can move between non-table tiles and interact with table tiles by picking up or placing objects. Ingredients (e.g., onions and tomatoes) and empty dishes can be picked up from the corresponding dispenser tiles and placed on empty table tiles or into the pots. The typical pipeline for completing an order is (1) players put appropriate ingredients into a pot; (2) a pot starts cooking automatically once filled and takes a certain amount of time (depending on the recipe) to finish; (3) a player harvests the cooked soup with an empty dish and deliver it to the serving area.

The \textbf{observation} for an agent includes the whole layout, items on the counter and pots, player positions, orders, and time. The possible \textbf{actions} are up, down, left, right, no-op, and "interacting" with the tile the player is facing. \textbf{Reward} is given to both agents upon successful soup delivery, with the amount varying with the type of soup. An episode of the game terminates when the time limit is reached.

The environment used in \citep{carroll2019utility} has only onions as ingredients and onion soups as orders. In our work, we evaluate all methods in three of them, namely \emph{Asymmetric Advantage}, \emph{Coordination Ring}, and \emph{Counter Circuit}, each designed to enforce a specific cooperation pattern. 

Our work introduces two new layouts: \emph{Distant Tomato} and \emph{Many orders}, with new ingredients and order types to make cooperation more challenging. In \emph{Distant Tomato}, a dish of onion soup takes 20 ticks to finish and gives 20 rewards when delivered, while a tomato soup takes 10 ticks and gives the same reward but needs more movements to get the ingredient. The two players need to agree on which type of soup to cook in order to reach a high score. Failure in cooperation may result in tomato-onion soups that give no reward. In many orders, there are three types of orders: onion, tomato, and 1-onion-2-tomato. To fully utilize the three pots, the players need to work seamlessly in filling not just the pots near each of them but also the pot in the middle.

We show all the layouts in Fig.\ref{fig:all_layouts}.  and conclude the cooperation pattern of our interest as follows.
\begin{itemize}
    \item \emph{Asymmetric Advantage} tests whether the players can choose a strategy to their strengths.
    \item \emph{Coordination Ring} requires the players not to block each other when traveling between the two corners.
    \item \emph{Counter Circuit} embeds a non-trivial but efficient strategy of passing onions through the middle counter, which needs close cooperation.
    \item \emph{Distant Tomato} and \emph{Many Orders} both encourage the players to reach an agreement on the fly in order to achieve a high reward.
\end{itemize}

\subsection{Events}
\label{sec:events}

In Overcooked, we consider the following events for random search in {\name} and reward shaping during training of all methods:
\begin{itemize}
    \item putting an onion/tomato/dish/soup on the counter,
    \item picking up an onion/tomato/dish/soup from the counter,
    \item picking up an onion from the onion dispenser,
    \item picking up a tomato from tomato dispenser,
    \item picking up a dish from the dish dispenser,
    \item picking up a ready soup from the pot with a dish,
    \item placing an onion/tomato into the pot,
    \item valid placement: after the placement, we can finish an order with a positive reward by placing other ingredients,
    \item optimal placement: the placement is optimal if the maximum order reward we can achieve for this particular pot is not decreased after the placement,
    \item catastrophic placement: the placement is catastrophic if the maximum order reward we can achieve for this particular pot decreases from positive to zero after the placement,
    \item useless placement: the placement is useless if the maximum order reward we can achieve for this particular pot is already zero before the placement,
    \item useful dish pickup: picking up a dish is useful when there are no dishes on the counter, and the number of dishes already taken by players is less than the total number of unready and ready soups,
    \item delivering a soup to the serving area.
\end{itemize}

Additionally, in \emph{Distant Tomato}, we consider the following events only for reward shaping,
\begin{itemize}
    \item placing a tomato into an empty pot,
    \item optimal tomato placement: the placement is optimal and a tomato placement,
    \item useful tomato pickup: the agent picks up a tomato when the partner isn't holding a tomato, and there is a pot that is not full but only has tomatoes in it.
\end{itemize}

\section{Overcooked Version}

In our experiments, we use two versions of Overcooked for a fair comparison with prior works and introduce challenging layouts. One version, in which we tested \emph{Asymmetric Advantages}, \emph{Coordination Ring} and \emph{Counter Circuit}, is consistent with the "neurips2019" branch in the released GitHub repository of \citep{carroll2019utility}. We remark that MEP~\citep{zhao2021maximum} also follows this version. Following this also allows us to perform an evaluation with human proxy models provided in the released code of \citep{carroll2019utility}. The other version is an up-to-date version of Overcooked, which supports tomatoes and user-defined orders. We notice that a pot automatically starts cooking soup once there are three items in it in the former version, while it requires an additional "interact" action to start cooking in the latter version. This additional "interact" is required in the latter version since it supports orders with different amounts of ingredients. However, having an additional "interact" significantly influences a human player's interactive experience. Therefore, we make modifications on the latter version to restrict orders to $3$ items and support auto-cooking when there are $3$ items. For more details, please refer to the released code.

\section{Implementation Details}
\label{sec:app-implement}
\subsection{{\name}}

\begin{algorithm}[H]
\caption{Hidden-Utility Self-Play}\label{algo:{\name}}
\For{$i=1\rightarrow N$}{
    Train $\pi_w^{(i)}$ and $\pi_a^{(i)}$ under sampled $R_w^{(i)}$\;
}
Run greedy policy selection to only keep $K$ policies\;
Initial policy $\pi_A$\;
\Repeat{enough iterations}{
    Rollout with $\pi_A$ and sampled $\pi_w^{(i)}$\;
    Update $\pi_A$\;
}
\end{algorithm}

The pseudocode of {\name} is shown in Algo.~\ref{algo:{\name}}. We implemented {\name} on top of MAPPO~\citep{yu2021surprising}. Following the standard practice, we use multiprocessing to collect trajectories in parallel and then update the models. 
In the first stage, we use MLP policies, which empirically yield better results. 
In the second stage, we use RNN policies so that the adaptive policy could infer the intention of its partner by observing the history of its partner and make decisions accordingly for better adaptation.
As suggested in\citep{tang2020discovering}, we add the identities of the policies in the policy pool as an additional feature to the critic. For better utilization of the computation resources, each environment sub-process loads a uniformly sampled policy and performs inference on CPUs, while the inference of the adaptive policy is batched across sub-processes in a GPU. 

\subsection{Baselines}
\label{sec:app-baselines}
For a fair comparison, we implement all baselines to be two-staged and train layout-specific agents.

We remark that our implementation of MEP achieves substantially higher scores than reported in the original paper~\citep{zhao2021maximum} when evaluated with the same human proxy models as MEP. All baselines are implemented with techniques stated above: loading policies from the pool per sub-process and the additional feature of identities of policies in the policy pool. We detail the baselines here and point out the difference with the original papers,

\textbf{FCP\citep{strouse2021collaborating}:} We list the differences between our implementation and the original FCP as follows,
\begin{enumerate}
    \item The original FCP uses image-based egocentric observations, while we use feature-based observations as provided in Overcooked.
    \item The original FCP uses a pool size of $96$ while we use $36$. We empirically found $36$ a sufficiently large pool size in our experiments. As shown in Table~\ref{tab:Average-reward-poolsize}, in the three layouts that have human proxy models, there is no significant difference between using a pool size of $36$ and of $72$.
\end{enumerate}

\textbf{MEP\citep{zhao2021maximum}:} We list the differences between our implementation and the original MEP as follows,
\begin{enumerate}
    \item While the released code of MEP uses MLP policy in the second training stage, we found RNN policy to work better. Intuitively, for better cooperation, the adaptive policy should infer the intention of its partner by observing the state-action history. 
    \item MEP uses a pool size of $15$ while we use $36$.
    \item MEP uses prioritized sampling in the second stage, which favors weak policies in the pool, while we adopt uniform sampling for MEP since we found prioritized sampling not helpful with our carefully tuned implementation (shown in Table \ref{tab:Average-reward-mep}). 
    \item In the released code of MEP, the policy updates are performed on data against only one policy from the pool, while we perform policy updates on data against many policies from the pool. This avoids the update from being biased towards some specific policies.
\end{enumerate}

\begin{table}[htbp]
\centering
\begin{tabular}{ccccc}
\toprule
                                      & Pos. & Asy. Adv. & Coor. Ring & Coun. Circ.  \\
                                      \midrule
\multirow{2}{*}{Uniform Sampling}     & 1    & 291.7\scriptsize{(4.6)} & 161.8\scriptsize{(0.7)}  & 108.8\scriptsize{(4.2)}     \\
                                      & 2    & 203.4\scriptsize{(2.0)} & 164.2\scriptsize{(2.1)}  & 111.1\scriptsize{(0.7)}     \\
                                      \hline
\multirow{2}{*}{Prioritized Sampling} & 1    & 284.6\scriptsize{(3.2)} & 161.2\scriptsize{(1.4)}  & 94.4\scriptsize{(2.3)}      \\
                                      & 2    & 218.8\scriptsize{(2.4)} & 167\scriptsize{(4.5)}    & 99.8\scriptsize{(1.8)}      \\
\bottomrule
\end{tabular}
\caption{Average episode reward and standard deviation (over 5 seeds) with different sampling methods of MEP. The "1" and "2" indicates the roles played by AI policies.}
\label{tab:Average-reward-mep}
\vspace{-5mm}
\end{table}

\textbf{TrajDiv\citep{lupu2021trajectory}:} While the original TrajDiv is tested in hand-crafted MDPs and Hanabi, we test TrajDiv in Overcooked. Although \citep{lupu2021trajectory} suggests training the adaptive policy and the policy pool together in a single stage, we choose to follow MEP and FCP to have a two-staged design that trains the adaptive policy in the second stage.

\subsection{Scripted Policies}
\label{sec:app-script}
To evaluate all methods with policies that have strong preferences, we consider the following scripted policies,

\begin{itemize}
    \item \emph{Onion/Tomato/Dish Everywhere} continuously tries to put onions, tomatoes or dishes over the counter.
    \item \emph{Onion/Tomato Placement} always tries to put onion or tomato into the pot.
    \item \emph{Delivery} delivers a ready soup to the serving area whenever possible.
    \item \emph{Onion/Tomato Placement and Delivery} puts tomatoes/onions into the pot in half of the time and tries to deliver soup in the other half of the time.
\end{itemize}

For \emph{Counter Circuit}, we additionally consider a scripted policy, named \emph{Onion to Middle Counter}, which keeps putting onions randomly over the counter in the middle of the layout. 

Input to these scripted policies is the ground-truth state of the game, which is accessible via the game simulator. When a scripted policy is unable to finish the event of its interest at some state, the scripted policy would walk to a random empty grid. For example, \emph{Onion Placement} would choose a random walk when all pots are full. We ensure that these scripted policies are strictly different from policies in the policy pool of {\name}. For more details, please refer to the released code.

\begin{table}
\centering
\begin{tabular}{ccccc}
\toprule
              & Biased  & Scripted  \\ 
\midrule
Asymm. Adv. & 0.56 & {\textbf{0.85}}   \\
Coord. Ring & 0.59 & {\textbf{0.72}} \\ 
Counter Circ. & 0.56 & {\textbf{0.73}} \\
Dist. Tomato & 0.70 & {\textbf{1.90}} \\
Many Orders & 0.55 & {\textbf{1.00}} \\
\bottomrule
\end{tabular}
\caption{The average event-based difference of biased and scripted policies respectively.}
\label{tab:script-diff}
\end{table}

We also provide evidence to show scripted policies are sufficiently different from those in the training pool. We use the expected event count of scripted and biased policies to support our claim. Recall that expected event count for a pair of policy $\pi_a,\pi_b$ is $\text{EC}(\pi_a,\pi_b)=\mathbb E[\sum_{t=1}^T \phi(s_t,a_t)|\pi_a,\pi_b]$. Let $\pi_{HSP}$ be the {\name} adaptive policy, $\{\pi_w^{(n)}\}_{n\in[N]}$ be the set of biased policies in the training pool, and $\{\pi_s^{(m)}\}_{m\in M}$ be the set of scripted policies. For convenience, let $\Pi=\{\pi_w^{(n)}\}_{n\in[N]}\cup \{\pi_s^{(m)}\}_{m\in M}$ be the \textbf{union} of biased policies and scripted policies. For each policy $\pi' \in \Pi$, we measure how close it is to the rest of policies in $\Pi$ in the expected event count, i.e. the \emph{event-based difference} $\text{EventDiff}_\Pi(\pi')=\min_{\pi''\in \Pi\backslash \{\pi'\}} \sum_k c_k\cdot|\text{EC}_k(\pi',\pi_{HSP}) - \text{EC}_k(\pi'', \pi_{HSP})|$ where $c_k$ is a frequency normalization constant. Then a large event-based difference indicates that $\pi'$ is sufficiently different from other policies in $\Pi$. We calculate the event-based difference for all biased and scripted policies. Table.~\ref{tab:script-diff} reports the average event-based difference between biased and scripted policies, respectively. Scripted policies consistently have a larger average event-based difference, indicating scripted policies are sufficiently different from biased policies, which are used for training the {\name} adaptive policy.

\section{Training Details}
\label{sec:app-training}
\subsection{Hyperparameters}
\label{sec:app-hyperparameters}
{\name} and baselines are all two-staged solutions by first constructing a policy pool and then training an adaptive policy $\pi_A$ to maximize the game reward w.r.t. the induced pool. 

The network architecture in both two stages is composed of 3 convolution layers with max pooling. Hyperparameters of these layers are listed in Table \ref{tab:cnn}. Each layer is followed by a max pooling layer with a kernel size of $2$. For MLP policies, we add two linear layers after the convolution. For RNN policies, we add a 1-layer GRU after the convolution and two linear layers after the GRU layer. The hidden sizes for these linear layers and the GRU layer are all $64$. We use ReLU as the activation function between layers and LayerNorm after GRU and linear layers except the last one.
The output is a 6-dim vector denoting the categorical action distribution. 

Common hyperparameters for all methods in 5 layouts are listed in Table \ref{tab:hyperparameter1} and Table \ref{tab:hyperparameter2}. 
Specifically, for MEP, we use the suggested hyperparameters from the original paper~\citep{zhao2021maximum}. Detailed hyperparameters of MEP are shown in Table \ref{tab:hyperparameter-mep}, where population entropy coef. adjusts the importance of the population entropy term. 
Detailed hyperparameters of TrajDiv are shown in Table \ref{tab:hyperparameter-traj}, where traj. gamma is the discounting factor used in local action kernel and diversity coef. adjusts the importance of the diversity term. For each one of MEP, FCP and TrajDiv, we train $12$ policies in the first stage and, following the convention of MEP~\citep{zhao2021maximum} and FCP~\citep{strouse2021collaborating}, take the init/middle/final checkpoints for each policy to build up the policy pool, leading to a pool size of $36$. For {\name}, we use a random search to first train $36$ biased policies and then filter out $18$ biased policies from them. We then combine these biased policies and past checkpoints of $6$ policies in the policy pool of MEP to build up the policy pool of {\name}, again leading to a pool size of $36$.

\begin{table}[h]
\centering
\begin{tabular}{ccccc}
\toprule
Layer &                      Out Channels &         Kernel Size & Stride & Padding \\
\midrule
1 & 32 & 3 & 1 & 1 \\
    2 &  64 & 3 & 1 & 1 \\
3 & 32 & 3 & 1 & 1 \\
\bottomrule
\end{tabular}
\caption{CNN feature extractor hyperparameters.}
\label{tab:cnn}
\vspace{-5mm}
\end{table}

\begin{table}[h]
\centering
\begin{tabular}{cc}
\toprule
common hyperparameters      & value  \\
\midrule
entropy coef. & 0.01 \\
gradient clip norm          & 10.0 \\
GAE lambda                   & 0.95    \\              
gamma                      & 0.99 \\
value loss                        & huber loss \\
huber delta                 & 10.0   \\
mini batch size           & batch size {/} mini-batch  \\
optimizer                & Adam       \\
optimizer epsilon            & 1e-5       \\
weight decay             & 0          \\
network initialization  & Orthogonal \\
use reward normalization   &True \\
use feature normalization   &True \\
learning rate & 5e-4 \\
parallel environment threads & 100 \\
ppo epoch & 15\\
environment steps & 10M\\
episode length & 400\\
reward shaping horizon & 100M\\
\bottomrule
\end{tabular}
\caption{Common hyperparameters in the first stage.}
\label{tab:hyperparameter1}
\end{table}

\subsection{Constructing the Policy Pool for {\name}}

To construct the policy pool for {\name}, we perform a random search over possible hidden reward functions. Each reward function is formulated as a linear function over the event-based features, i.e. $\mathcal R=\{R_w:R_w(s, a_1, a_2)=\phi(s, a_1, a_2)^Tw, ||w||_{\infty}\le C_{\max}\}$ where $\phi:\mathcal S\times \mathcal A\times\mathcal A\rightarrow \mathbb R^{m}$ specifies occurrences of different events when taking joint action $(a_1,a_2)$ at state $s$. To perform random search, instead of directly sampling each $w_j$ from the section $[-C_{\max}, C_{\max}]$, we sample each $w_j$ from a set of possible values $C_j$. We detail the $C_j$ for each event on each layout here. Tab.~\ref{tab:random-search-old-C} shows $C_j$ in \emph{Asymmetric Advantages}, \emph{Coordination Ring} and \emph{Counter Circuit}. Tab.~\ref{tab:random-search-far_tomato-C} and Tab.~\ref{tab:random-search-many_orders-C} show $C_j$ in \emph{Distant Tomato} and \emph{Many Orders} respectively. A detailed description of the events is shown in Sec.~\ref{sec:events}. Note that in addition to events, we also include order reward as one element in a random search. 

To filter out duplicated policies, we define an \emph{event-based diversity} for a subset $S$, i.e. $\text{ED}(S)=\sum_{i,j\in S}\sum_k c_k\cdot |\text{EC}_k^{(i)} - \text{EC}_k^{(j)}|$ where $\text{EC}_{k}^{i}$ is the expected number of occurrences of event type $k$ for biased policy $\pi_w^{(i)}$. The coefficient $c_k$ balances the importance of different kinds of events. We simply set $c_k$ as a normalization constant, i.e. $c_k=\left(\max_{i\in[N]}\text{EC}_k^{(i)}\right)^{-1}$.

\begin{table}[ht]
\centering
\begin{tabular}{cc}
\toprule
common hyperparameters      & value  \\
\midrule
entropy coef. & 0.01 \\
gradient clip norm          & 10.0 \\
GAE lambda                   & 0.95    \\              
gamma                      & 0.99 \\
value loss                        & huber loss \\
huber delta                 & 10.0   \\
mini batch size           & batch size {/} mini-batch  \\
optimizer                & Adam       \\
optimizer epsilon            & 1e-5       \\
weight decay             & 0          \\
network initialization  & Orthogonal \\
use reward normalization   &True \\
use feature normalization   &True \\
learning rate & 5e-4 \\
parallel environment threads & 300 \\
ppo epoch & 15\\
environment steps & 100M\\
episode length & 400\\
reward shaping horizon & 100M\\
policy pool size & 36 \\
\bottomrule
\end{tabular}
\caption{Common hyperparameters in the second stage.}
\label{tab:hyperparameter2}
\end{table}

\begin{table}[ht]
\centering
\begin{tabular}{cc}
\toprule
hyperparameters      & value  \\
\midrule
population entropy coef.          & 0.01 \\
\bottomrule
\end{tabular}
\caption{MEP hyperparameters in the first stage.}
\label{tab:hyperparameter-mep}
\end{table}

\begin{table}[ht]
\centering
\begin{tabular}{cc}
\toprule
hyperparameters      & value  \\
\midrule
traj. gamma          & 0.5 \\
diversity coef.         & 0.1 \\
\bottomrule
\end{tabular}
\caption{TrajDiv hyperparameters in the first stage.}
\label{tab:hyperparameter-traj}
\end{table}

\begin{table}[ht]
\centering
\begin{tabular}{cc}
\toprule
Event     & $C_j$  \\
\midrule
Picking up an onion from onion dispenser & -10, 0, 10 \\
Picking up a dish from dish dispenser & 0, 10 \\
Picking up a ready soup from the pot& -10, 0, 10\\
Placing an onion into the pot & -10, 0, 10\\
Delivery & -10, 0 \\
Order reward & 0, 1 \\
\bottomrule
\end{tabular}
\caption{$C_j$ for random search in Asymmetric Advantages, Coordination Ring and Counter Circuit.}
\label{tab:random-search-old-C}
\end{table}

\begin{table}[ht]
\centering
\begin{tabular}{cc}
\toprule
Event     & $C_j$  \\
\midrule
Picking up an onion from onion dispenser & -5, 0, 5 \\
Picking up a tomato from tomato dispenser & 0, 10, 20 \\
Picking up a dish from dish dispenser & 0, 10 \\
Picking up a soup & -5, 0, 5\\
Viable placement & -10, 0, 10 \\
Optimal placement & -10, 0, 10\\
Catastrophic placement & 0, 10 \\
Placing an onion into the pot & -10, 0, 10\\
Placing a tomato into the pot & -10, 0, 10\\
Delivery & -10, 0 \\
Order reward & 0, 1 \\
\bottomrule
\end{tabular}
\caption{$C_j$ for random search in Distant Tomato.}
\label{tab:random-search-far_tomato-C}
\end{table}

\begin{table}[ht]
\centering
\begin{tabular}{cc}
\toprule
Event     & $C_j$  \\
\midrule
Picking up an onion from onion dispenser & -5, 0, 5 \\
Picking up a tomato from tomato dispenser & 0, 10, 20 \\
Picking up a dish from dish dispenser & 0, 5 \\
Picking up a soup & -5, 0, 5\\
Viable placement & -10, 0, 10 \\
Optimal placement & -10, 0\\
Catastrophic placement & 0, 10 \\
Placing an onion into the pot & -3, 0, 3\\
Placing a tomato into the pot & -3, 0, 3\\
Delivery & -10, 0 \\
Order reward & 0, 1 \\
\bottomrule
\end{tabular}
\caption{$C_j$ for random search in Many Orders.}
\label{tab:random-search-many_orders-C}
\end{table}

\begin{table}[ht]
\centering
\begin{tabular}{cc}
\toprule
Event     & Value  \\
\midrule
Optimal placement & 3\\
Picking up a dish from dish dispenser & 3 \\
Picking up a ready soup from the pot & 5 \\
\bottomrule
\end{tabular}
\caption{Reward shaping for Asymmetric Advantages, Coordination Ring and Counter Circuit in the first stage.}
\label{tab:rew-shaping-first-old}
\vspace{-5mm}
\end{table}

\begin{table}[ht]
\centering
\begin{tabular}{cc}
\toprule
Event     & Value  \\
\midrule
Optimal placement & 3\\
Picking up a dish from dish dispenser & 3 \\
Picking up a ready soup from the pot & 5 \\
\bottomrule
\end{tabular}
\caption{Reward shaping for Asymmetric Advantages, Coordination Ring and Counter Circuit in the second stage.}
\label{tab:rew-shaping-second-old}
\end{table}

\begin{table}[ht]
\centering
\begin{tabular}{cc}
\toprule
Event     & Value  \\
\midrule
Picking up a dish from dish dispenser & 3 \\
Picking up a ready soup from the pot & 5 \\
\bottomrule
\end{tabular}
\caption{Reward shaping for Distant Tomato and Many Orders in the first stage.}
\label{tab:rew-shaping-first-new}
\end{table}

\begin{table}[ht]
\centering
\begin{tabular}{cc}
\toprule
Event     & Value  \\
\midrule
Picking up a dish from the dish dispenser & 3 \\
Picking up a ready soup from the pot & 5 \\
\bottomrule
\end{tabular}
\caption{Reward shaping for Many Orders in the second stage.}
\label{tab:rew-shaping-second-many-orders}
\end{table}

\begin{table}[ht]
\centering
\begin{tabular}{cc}
\toprule
Event     & Value  \\
\midrule
Picking up a dish from the dish dispenser & 3 \\
Picking up a ready soup from the pot & 5 \\
Useful tomato pickup & 10 \\
Optimal tomato placement & 5 \\
Placing a tomato into an empty pot & -15 \\ 
\bottomrule
\end{tabular}
\caption{Reward shaping for Distant Tomato in the second stage.}
\label{tab:rew-shaping-second-distant-tomato}
\end{table}

\subsection{Reward Shaping}

We use reward shaping during training in all layouts, detailed as follows,
\begin{itemize}
    \item In the first stage, the reward shaping for \emph{Asymmetric Advantages}, \emph{Coordination Ring} and \emph{Counter Circuit} is shown in Table.~\ref{tab:rew-shaping-first-old} and that for \emph{Distant Tomato} and \emph{Many Orders} is shown in Table.~\ref{tab:rew-shaping-first-new}. Note that we do not use reward shaping when training biased policies for {\name} in the first stage.
    \item In the second stage, the reward shaping for \emph{Asymmetric Advantages}, \emph{Coordination Ring} and \emph{Counter Circuit} is shown in Table.~\ref{tab:rew-shaping-second-old}. Reward shaping for \emph{Many Orders} is shown in Table.~\ref{tab:rew-shaping-second-many-orders} and that for \emph{Distant Tomato} is shown in Table.~\ref{tab:rew-shaping-second-distant-tomato}. The factor of shaped reward anneals from $1$ to $0$ during the whole course of training in all layouts except \emph{Distant Tomato}, in which the factor anneals from $1$ to $0.5$. 
\end{itemize}

\section{Full Results}
\label{sec:app-results}
\subsection{Cooperation with Learned Human Models}
\label{sec:app-results-learned}
Table~\ref{tab:Average-reward-proxy} shows average episode reward and standard deviation (over 5 seeds) on 3 layouts for different methods played with human proxy policies. All values within 5 standard deviations of the maximum episode return are marked in bold. 
These three simple layouts may not fully reflect the performance gap between the baselines and {\name}. The results with learned human models are reported for a fair comparison with existing SOTA methods. Besides, our implementation of the baselines achieves substantially better results than their original papers with the same human proxy models, making the improvement margin look smaller. We also remark that the learned human models have limited representation power to imitate natural human behaviors that typically cover many behavior modalities. Here we give empirical evidence of the learned human models failing to fully reflect human behaviors.

\subsubsection{Empirical Evidence}
\label{sec:app-evidence}

The original Overcooked paper~\citep{carroll2019utility} collected human-play trajectories. We then collect game trajectories played by the learned human models and compare them with human-play trajectories by measuring \emph{self-delivery ratio}, i.e., the ratio of deliveries by the specific player to the total delivery number in a trajectory, and \emph{self-cooking ratio}, which is the ratio of onions that the player places in the pot to the total pot placement number in a trajectory. The distributions of these trajectories are demonstrated in Fig.~\ref{fig:human-imitate-traj}. From the figure, we can observe that the learned human models can not fully cover human behaviors. This suggests that evaluation results with the learned human models can not provide a comprehensive comparison among different methods.

\begin{figure*}[t!]
	\centering
    \subfloat[Asymmetric Advantages]{
		\includegraphics[width=0.35\textwidth]{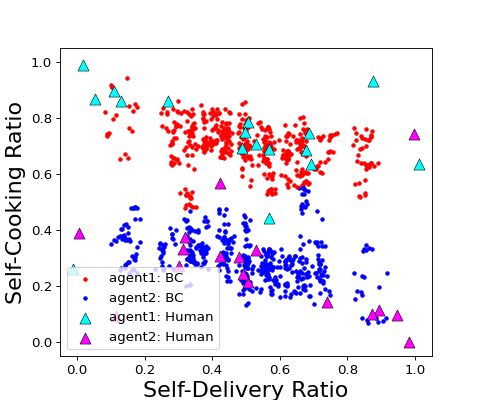}
		}
    \subfloat[Coordination Ring]{
		\includegraphics[width=0.35\textwidth]{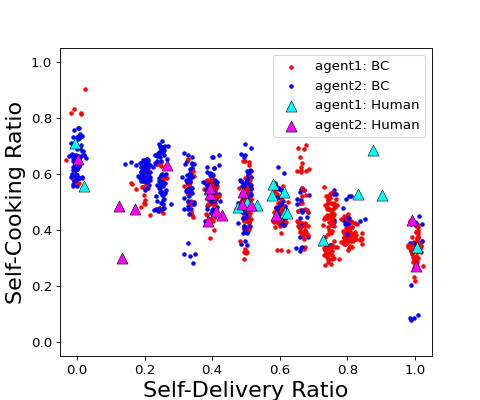}
		}
    \subfloat[Counter Circuit]{
		\includegraphics[width=0.35\textwidth]{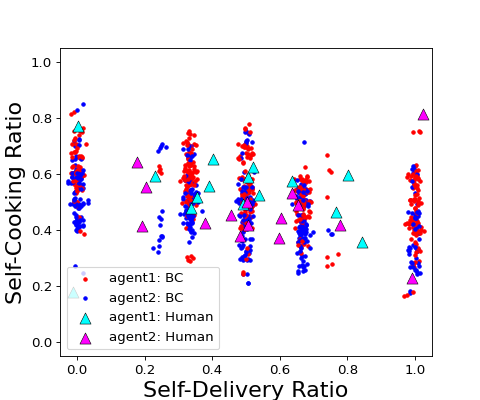}
		}
	\centering 
	\caption{Trajectories induced by the learned human models and human players in Asymmetric Advantages, Coordination Ring and Counter Circuit. Each point or triangle denotes a trajectory with the X-axis coordinate being the self-cooking ratio, which is the ratio of onions the player places in the pot to the total amount of placements in the trajectory, and the Y-axis coordinate being the self-delivery ratio, which is the ratio of deliveries given by the player to the total number of deliveries in the trajectory. Triangles and points denote trajectories induced by human players and learned human models, respectively. Different colors stand for different player indices. "BC" represents the learned human models, and "Human" denotes human players. Clearly, trajectories induced by the learned human models can not fully cover those by human players.}
\label{fig:human-imitate-traj}
\vspace{-3mm}
\end{figure*}

\begin{table}[hp]
\centering
\begin{tabular}{ccccc} 
\toprule
                                     & Pos. & Asy. Adv.           & Coor. R.          & Coun. Circ.          \\ 
\midrule
\multirow{2}{*}{FCP}     & 1    & 282.8\scriptsize{(9.4)}          & \textbf{161.3}\scriptsize{(1.6)}          & 95.9\scriptsize{(2.0)}            \\
                                     & 2    & 203.8\scriptsize{(8.2)}          & \textbf{161.0}\scriptsize{(2.7)}          & 92.7\scriptsize{(1.3)}            \\ 
\hline
\multirow{2}{*}{MEP}     & 1    & 291.7\scriptsize{(4.6)}          & \textbf{161.8}\scriptsize{(0.7)} & \textbf{108.8}\scriptsize{(4.2)}              \\
                                     & 2    & 203.4\scriptsize{(2.0)}          & \textbf{164.2}\scriptsize{(2.1)} & \textbf{111.1}\scriptsize{(0.7)}              \\ 
\hline
\multirow{2}{*}{TrajDiv} & 1    & 289.3\scriptsize{(8.8)}          & 150.8\scriptsize{(3.1)}          & 60.1\scriptsize{(5.0)}              \\
                                     & 2    & 194.2\scriptsize{(0.7)}          & 142.1\scriptsize{(2.3)}          & 53.7\scriptsize{(12.4)}              \\ 
\hline
\multirow{2}{*}{HSP}     & 1    & \textbf{300.3}\scriptsize{(2.2)} & \textbf{160.0}\scriptsize{(2.6)}          & \textbf{107.4}\scriptsize{(3.5)}  \\
                                     & 2    & \textbf{217.1}\scriptsize{(3.3)} & \textbf{160.6}\scriptsize{(3.3)}          & \textbf{106.6}\scriptsize{(3.0)}           \\
\bottomrule
\end{tabular}
\caption{Average episode reward and standard deviation (over 5 seeds) on 3 layouts for different methods played with human proxy policies. All values within 5 standard deviations of the maximum episode return are marked in bold. The Pos. column indicates the roles played by AI policies.}
\label{tab:Average-reward-proxy}
\end{table}

\subsection{Ablation Studies}

\subsubsection{Pool Size}
\label{sec:app-poolsize}
Table~\ref{tab:Average-reward-poolsize} shows the average episode reward on 3 layouts with different sizes of the final policy pool for training the adaptive policy. Since increasing the pool size to 72 gives little improvement as suggested by the result, we use 36 in our experiments for computation efficiency.

\begin{table}[h]
\centering
\begin{tabular}{ccccc|ccc} 
\toprule
                         & Pos.                 & Asy. Adv.  & Coor. R.   & Coun. Circ.        & Asy. Adv.   & Coor. R.   & Coun. Circ.       \\ 
\hline
\multicolumn{1}{l}{}     & \multicolumn{1}{l}{} & \multicolumn{3}{c|}{policy pool size = 36} & \multicolumn{3}{c}{policy pool size = 72}  \\ 
\midrule
\multirow{2}{*}{FCP}     & 1                    & 282.8\scriptsize{(9.4)} & 161.3\scriptsize{(1.6)} & 95.9\scriptsize{(2.0)}        & 278.3\scriptsize{(16.0)} & 158.9\scriptsize{(0.6)} & 91.9\scriptsize{(7.5)}       \\
                         & 2                    & 203.8\scriptsize{(8.2)} & 161.0\scriptsize{(2.7)} & 92.7\scriptsize{(1.3)}        & 200.9\scriptsize{(13.2)} & 156.9\scriptsize{(4.7)} & 90.7\scriptsize{(4.8)}       \\ 
\hline
\multirow{2}{*}{MEP}     & 1                    & 291.7\scriptsize{(4.6)} & 161.8\scriptsize{(0.7)} & 108.8\scriptsize{(4.2)}       & 298.2\scriptsize{(5.4)}  & 157.3\scriptsize{(2.7)} & 104.6\scriptsize{(5.0)}      \\
                         & 2                    & 203.4\scriptsize{(2.0)} & 164.2\scriptsize{(2.1)} & 111.1\scriptsize{(0.7)}       & 207.8\scriptsize{(7.3)}  & 158.9\scriptsize{(3.0)} & 105.0\scriptsize{(2.2)}        \\ 
\hline
\multirow{2}{*}{TrajDiv} & 1                    & 289.3\scriptsize{(8.8)} & 150.8\scriptsize{(3.1)} & 60.1\scriptsize{(5.0)}     & 270.8\scriptsize{(2.5)}  & 142.5\scriptsize{(2.8)} & 70.1\scriptsize{(6.7)}       \\
                         & 2                    & 194.2\scriptsize{(0.7)} & 142.1\scriptsize{(2.3)} & 53.7\scriptsize{(12.4)} & 192.8\scriptsize{(8.7)}  & 137.3\scriptsize{(4.9)} & 63.8\scriptsize{(8.2)}       \\
\bottomrule
\end{tabular}
\caption{Average episode reward and standard deviation (over 5 seeds) on 3 layouts for different methods played with human proxy policies. The Pos. column indicates the roles played by AI policies.}
\label{tab:Average-reward-poolsize}
\vspace{-5mm}
\end{table}

\subsubsection{Policy Pool Construction}
\label{sec:app-poolconstruction}
{\name} has two techniques for the policy pool, i.e., (1) policy filtering to remove duplicated biased policies and (2) the use of MEP policies under the game reward for half of the pool size. 
We measure the performance with human proxies by turning these options off. For ``\emph{HSP w.o. Filtering}'', we keep all the policies by random search in the policy pool, which results in a larger pool size of 54 (18 MEP policies and a total of 36 random search ones). For``\emph{HSP w.o. MEP}'', we exclude MEP policies in the policy pool and keep all the biased policies without filtering, which leads to the same pool size of 36. The results are shown in 
Table.~\ref{tab:Average-reward-policy}. 

\begin{table}
\centering

\begin{tabular}{ccccc} 
\toprule
                                        & Pos. & Asy. Adv.   & Coor. R. & Cou. Circ.  \\ 
\midrule

\multirow{2}{*}{HSP w.o. MEP (pool size = 36)}  & 1    & 308.5\scriptsize{(4.4)}  & 157.5\scriptsize{(3.0)} & 94.0\scriptsize{(2.7)}      \\
                                        & 2    & 219.6\scriptsize{(15.9)} & 157.7\scriptsize{(2.5)} & 100.4\scriptsize{(1.1)}   \\ 
\hline
\multirow{2}{*}{HSP w.o. Filtering (pool size = 54)}       & 1    & 311.3\scriptsize{(8.1)}  & 139.2\scriptsize{(5.6)} & 80.1\scriptsize{(4.6)}    \\
                                        & 2    & 209.3\scriptsize{(4.0)}  & 138.5\scriptsize{(3.1)} & 88.7\scriptsize{(0.9)}    \\
\hline
\multirow{2}{*}{HSP (pool size = 36)}   & 1    & 300.3\scriptsize{(2.2)}  & 160.0\scriptsize{(2.6)} & 107.4\scriptsize{(3.5)}   \\
                                        & 2    & 217.1\scriptsize{(3.3)}  & 160.6\scriptsize{(3.3)} & 106.6\scriptsize{(3.0)}   \\ 
\bottomrule
\end{tabular}

\caption{Average episode reward and standard deviation (over 5 seeds) on 3 layouts for different methods played with human proxy policies. The Pos. column indicates the roles played by AI policies.}
\label{tab:Average-reward-policy}
\end{table}

\subsection{Cooperation with Scripted Policies with Strong Behavior Preferences}
Table~\ref{tab:app-Average-reward-script} illustrates average episode reward and standard deviation (over 5 seeds) in all layouts with scripted policies. All values within a difference of 5 from the maximum value are marked in bold.

\begin{sidewaystable}[h]
\centering
\begin{tabular}{cccccccccc} 
\toprule
                             & \multirow{2}{*}{Scripts} & \multicolumn{2}{c}{FCP}                   & \multicolumn{2}{c}{MEP}                     & \multicolumn{2}{c}{TrajDiv} & \multicolumn{2}{c}{HSP}                      \\ 
\cline{3-10}
                             &                          & 1                   & 2                   & 1                    & 2                    & 1           & 2             & 1                    & 2                     \\ 
\midrule
\multirow{2}{*}{Asy. Adv.}   & Oni. Pla.                & 304.0\scriptsize{(20.8)}         & 365.7\scriptsize{(5.3)}          & 295.0\scriptsize{(25.7)}          & 365.9\scriptsize{(2.8)}           & 296.5\scriptsize{(25.2)} & 350.7\scriptsize{(8.9)}    & \textbf{357.3}\scriptsize{(2.4)}  & \textbf{396.3}\scriptsize{(17.4)}  \\
                             & Oni. Pla.\&Deli.               & 364.9\scriptsize{(2.9)}          & \textbf{230.5}\scriptsize{(3.9)} & \textbf{366.8}\scriptsize{(1.5)}  & \textbf{230.3}\scriptsize{(5.3)}  & 358.2\scriptsize{(6.0)}  & 221.8\scriptsize{(3.4)}    & \textbf{371.4}\scriptsize{(3.3)}  & \textbf{229.9}\scriptsize{(4.9)}   \\ 
\hline
\multirow{2}{*}{Coor. Ring}  & Oni. Every.              & 106.0\scriptsize{(8.7)}          & 112.1\scriptsize{(7.2)}          & \textbf{124.6}\scriptsize{(3.6)}  & \textbf{123.4}\scriptsize{(3.2)}  & 115.7\scriptsize{(11.1)} & 118.1\scriptsize{(6.7)}    & \textbf{120.8}\scriptsize{(14.3)} & \textbf{121.7}\scriptsize{(10.9)}  \\
                             & Dish Every.              & 91.9\scriptsize{(4.1)}           & 97.0\scriptsize{(3.4)}           & 101.4\scriptsize{(3.1)}           & 99.0\scriptsize{(7.3)}            & 107.5\scriptsize{(5.2)}  & 107.1\scriptsize{(5.3)}    & \textbf{114.9}\scriptsize{(7.7)}  & \textbf{116.0}\scriptsize{(7.2)}   \\ 
\hline
\multirow{2}{*}{Coun. Circ.} 
& Oni. Every.              & 64.80\scriptsize{(8.8)}          & 62.7\scriptsize{(9.5)}           & 90.6\scriptsize{(3.9)}            & 87.4\scriptsize{(6.2)}            & 86.0\scriptsize{(12.1)}  & 78.1\scriptsize{(13.4)}    & \textbf{110.5}\scriptsize{(3.7)}  & \textbf{104.4}\scriptsize{(3.4)}   \\
                             & Dish Every.              & 57.9\scriptsize{(5.3)}           & 56.1\scriptsize{(5.2)}           & 53.5\scriptsize{(1.1)}            & 52.5\scriptsize{(2.4)}            & 59.3\scriptsize{(2.9)}   & 55.0\scriptsize{(1.5)}     & \textbf{79.3}\scriptsize{(3.5)}   & \textbf{77.8}\scriptsize{(4.7)}    \\
                             
                             & Oni.2Mid. Cou.              & 73.0\scriptsize{(11.5)}           & 72.4\scriptsize{(12.6)}           & 112.1\scriptsize{(6.5)}            & 110.7\scriptsize{(5.8)}            & 109.5\scriptsize{(5.6)}   & 107.9\scriptsize{(5.5)}     & \textbf{149.3}\scriptsize{(3.1)}   & \textbf{147.1}\scriptsize{(3.2)}    \\
\hline
\multirow{2}{*}{Dis. Toma.}  & Toma. Pla.               & 13.5\scriptsize{(2.6)}           & 17.8\scriptsize{(7.6)}           & 25.9\scriptsize{(17.6)}           & 14.3\scriptsize{(3.6)}            & 29.6\scriptsize{(14.1)}  & 17.2\scriptsize{(4.8)}     & \textbf{277.7}\scriptsize{(9.7)}  & \textbf{278.9}\scriptsize{(18.8)}  \\
                             & Toma. Pla.\&Deli.              & 197.5\scriptsize{(4.9)}          & 158.3\scriptsize{(7.3)}          & 185.9\scriptsize{(9.9)}           & 174.8\scriptsize{(7.4)}           & 175.3\scriptsize{(21.6)} & 154.3\scriptsize{(17.6)}   & \textbf{237.2}\scriptsize{(8.2)}  & \textbf{231.9}\scriptsize{(22.0)}  \\ 
\hline
\multirow{2}{*}{Many Ord.}   & Toma. Pla.               & 285.4\scriptsize{(19.5)}         & 279.8\scriptsize{(12.8)}         & 224.1\scriptsize{(61.2)}          & 227.6\scriptsize{(60.5)}          & 263.8\scriptsize{(10.1)} & 254.6\scriptsize{(5.7)}    & \textbf{315.7}\scriptsize{(12.1)} & \textbf{319.9}\scriptsize{(6.5)}   \\
                             & Toma. Pla.\&Deli.              & \textbf{334.1}\scriptsize{(5.6)} & \textbf{324.2}\scriptsize{(4.9)} & \textbf{329.1}\scriptsize{(12.7)} & \textbf{327.2}\scriptsize{(12.4)} & 298.8\scriptsize{(3.2)}  & 292.6\scriptsize{(1.5)}    & 326.7\scriptsize{(4.0)}           & \textbf{322.3}\scriptsize{(3.9)}   \\
\bottomrule
\end{tabular}
\caption{Average episode reward and standard deviation (over 5 seeds) in all layouts with scripted policies. All values within Within a difference of 5 from maximum reward are marked in bold. The "1" and "2" indicates the roles played by AI policies.}
\label{tab:app-Average-reward-script}
\end{sidewaystable}

\subsection{Human-AI Experiment}
\label{sec:app-human}
\subsubsection{Experiment Setting}
We recruited 60 volunteers by posting the experiment advertisement on a public platform. They are provided with a detailed introduction to the basic gameplay and the experiment process. The Overcooked game was deployed remotely on a server that the volunteers could access with their browsers. According to the feedback, over 90 percent of volunteers had no prior experience with Overcooked.  We uniformly divided 60 volunteers into 5 groups assigned to each of the 5 layouts. We designed the experiment to last around 30 minutes for each volunteer to ensure the validity of the data. Due to availabilities of volunteers, experiments are conducted within two consecutive days.
The experiment has two stages. In the first stage, which is called the \emph{warm-up} stage, the participants are encouraged to explore the behaviors of 4 given AI agents without a time limit. After the first stage, they are required to comment on their game experience, e.g., whether the AI agents are cooperative and comfortable to play with, and rank the agents accordingly. In the second stage, each participant is instructed to achieve scores as high as they could in 24 games (4 AI agents $\times$ 2 player positions $\times$ 3 repeats). 

We remark that, on the environment side, different from human-AI experiments performed by prior works~\citep{zhao2021maximum, carroll2019utility} in Overcooked, we slow down the AI agents so that the AI agents have similar speed with human players. More specifically, 7 idle steps are inserted before each step of the AI agent. Such an operation is necessary since, in our prior user studies, we find that human players commonly feel uncomfortable if the AI agent is much faster and human players could contribute little to the score.

\subsubsection{Human Feedback}
\label{sec:app-humanfeedback}

We collected and analyzed the feedback from the participants to see how they felt playing with AI agents. Here we summarize the typical reflections.
\begin{enumerate}
    \item In \emph{Coordination Ring}, the most annoying thing reported is players blocking each other during movement. To effectively maneuver in the ring-like layout, players must reach a temporary agreement on either going clockwise or counterclockwise. {\name} is the only AI able to make way for the other player, while others can not recover by themselves once stuck. For example, both FCP and TrajDiv players tend to take a plate and wait next to the pot immediately after one pot is filled. But they can neither take a detour when blocked on their way to the dish dispenser nor yield their position to the human player trying to pass through. The video recorded in the human study can be found in Part 4.2 of \url{https://sites.google.com/view/hsp-iclr}.
    \item In \emph{Counter Circuit}, one efficient strategy is passing onion via the counter in the middle of the room: a player at the bottom fetches onions and places them on the counter, while another player at the top picks up the onions and puts them into pots. {\name} is the only AI player capable of this strategy in both top and bottom places and performs the highest onion passing frequency cooperating with human players as shown in Figure.~\ref{fig:real-human-random3}.
    \item In \emph{Distant Tomato}, one critical thing is that mixed (onion-tomato) soups give no reward, which means two players need to agree on the soup to cook. All AI agents perform well when the other player focuses on onion soups. However, all AI agents except for {\name} fail to deal with tomato-preferring partners as shown in Table.~\ref{tab:distant-tomato}. FCP, MEP, or TrajDiv agents never actively choose to place tomatoes and keep placing onions even when a pot has tomatoes in it, resulting in invalid orders. On the contrary, {\name} chooses to place tomatoes when there are tomatoes in the pot. Participants commonly agree that the {\name} agent is the best partner to play with in this layout. The video recorded in the human study can be found in Part 4.2 of \url{https://sites.google.com/view/hsp-iclr}.
    \item In \emph{Many Orders}, most participants claim that {\name} is able to pick up soups from all three pots, while other AI agents only concentrate on the pot in front of them and ignore the middle pot even if the human player attempts to use it. Table. ~\ref{tab:many-orders} shows that {\name} agent picks up most soups from the middle pot and meanwhile gets the highest average episode reward.
\end{enumerate}

\subsubsection{Human Preference on Different AI Agents}
\label{sec:app-humanrank}
Figure.~\ref{fig:human-ranking} illustrates human preference for different AI agents. In all layouts except a relatively restricted and simple layout, \emph{Coordination Ring}, human players strongly prefer {\name} over other AI agents. In \emph{Coordination Ring}, though human players rank MEP above {\name}, {\name} is still significantly better than FCP and TrajDiv.

\textbf{Calculation Method}: Human preference for different methods is computed as follows. Assume we are comparing human preference between method A and method B. Let $N$ be the total number of human players attending the experiments in one layout, $N_A$ be the number of human players who rank A over B, and $N_B$ be the number of those who rank B over A. "Human preference for method A over method B" is computed as $\frac{N_A}{N} - \frac{N_B}{N}$.

\begin{figure}[h]
\vspace{-3mm}
	\centering
 	\subfloat[Asy. Adv.]{
 		\includegraphics[height=2.68cm]{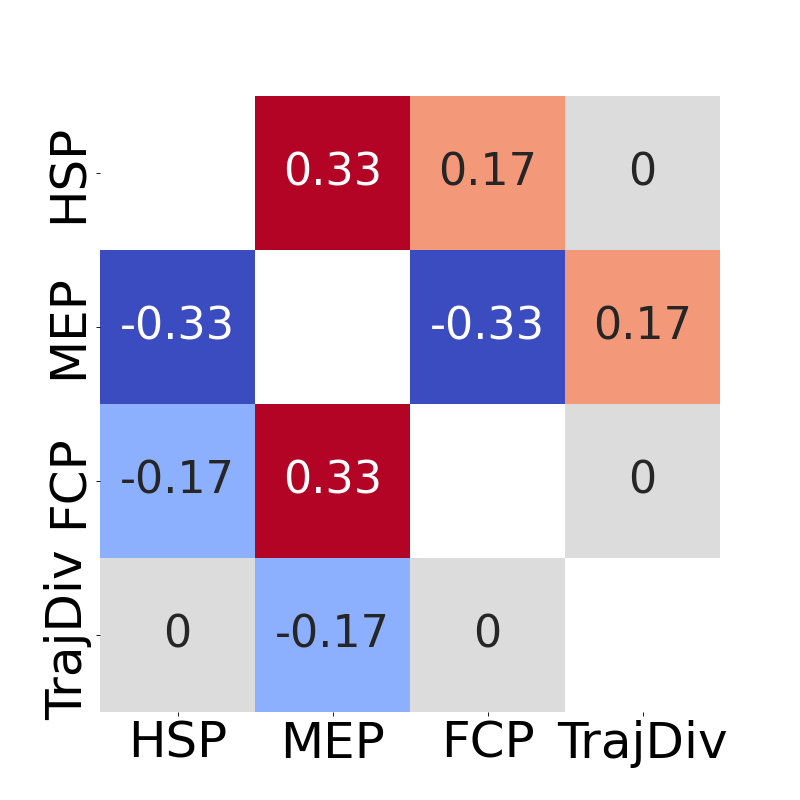}
 		}
 	\subfloat[Coor. Ring]{
 		\includegraphics[height=2.68cm]{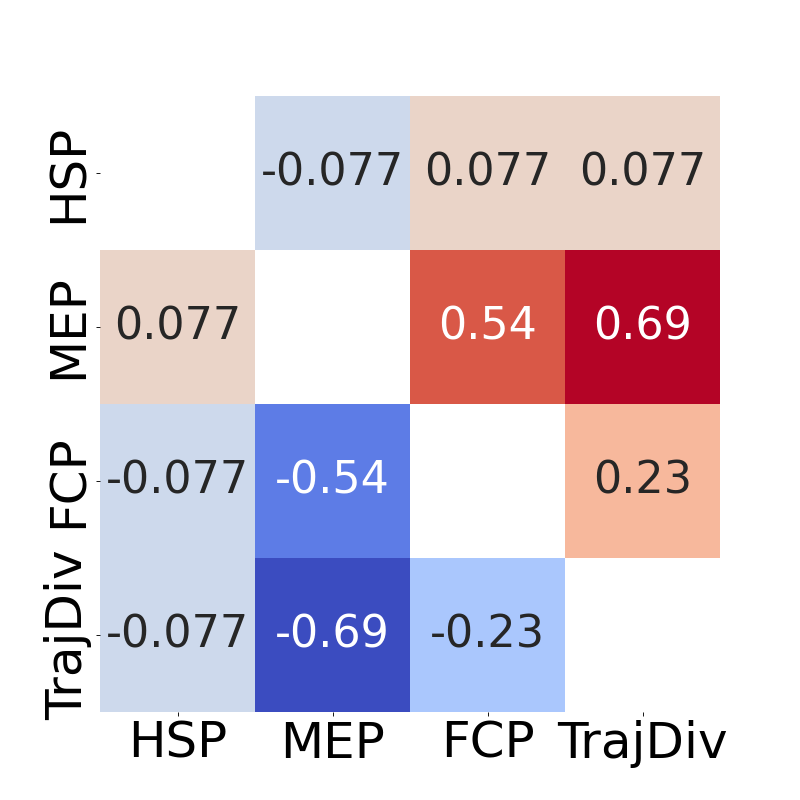}
 		}
 	\subfloat[Coun. Circ.]{
 		\includegraphics[height=2.68cm]{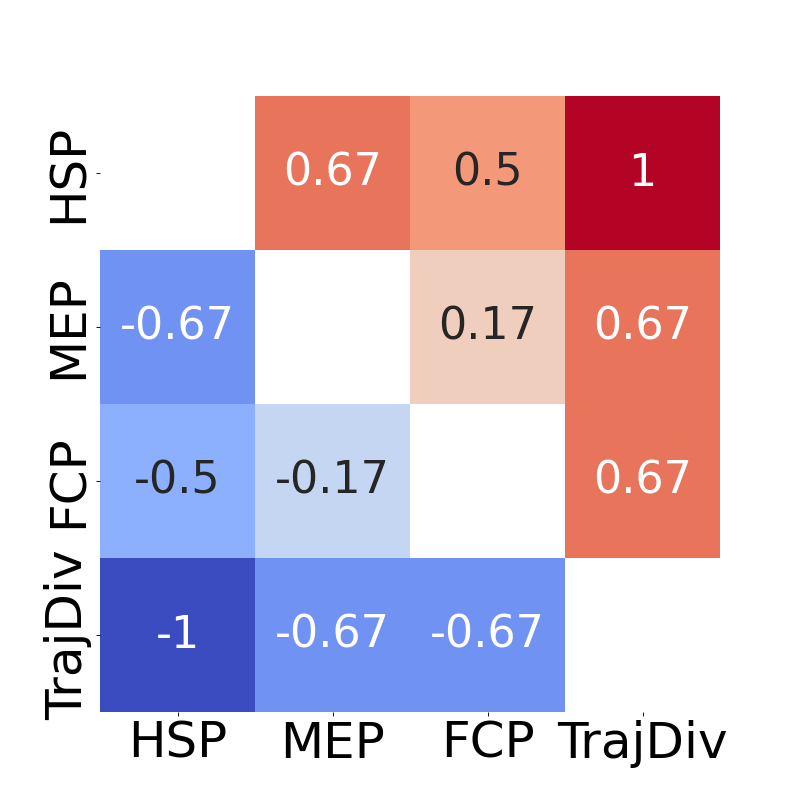}
 		}
 	\subfloat[Dis. Toma.]{
 		\includegraphics[height=2.68cm]{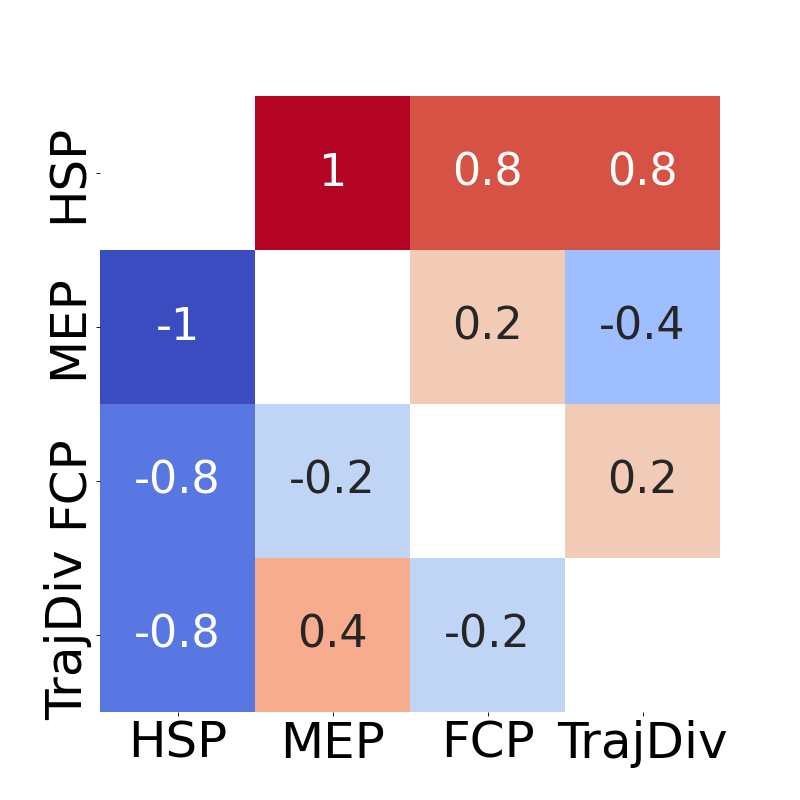}
 		}
 	\subfloat[Many Ord.]{
 		\includegraphics[height=2.68cm]{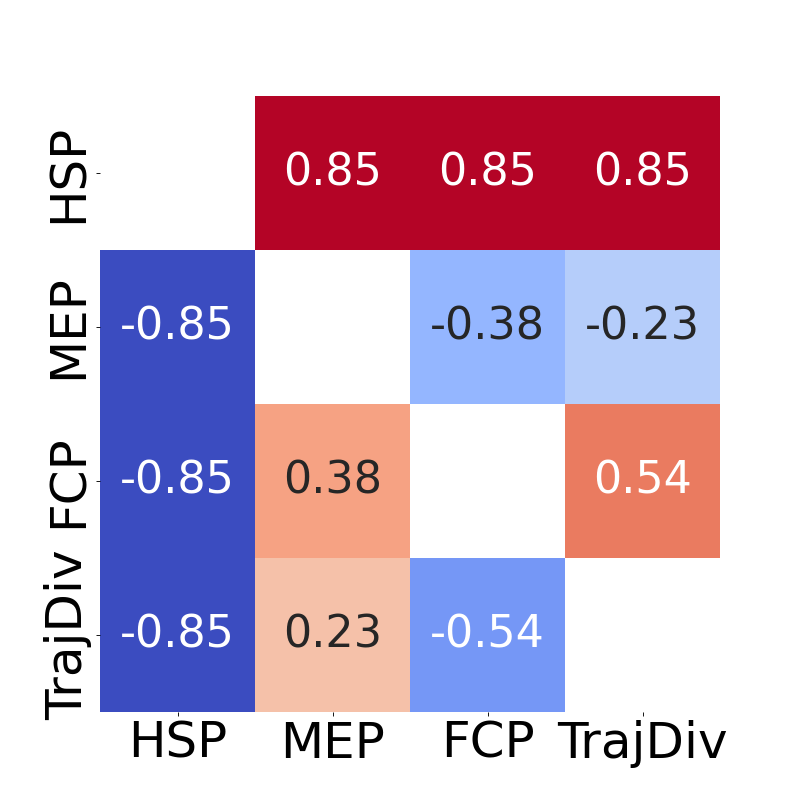}
 		}
	\centering 
	\vspace{-2mm}
	\caption{Human preference for row partner over column partner in all layouts.}
    \label{fig:human-ranking}
\end{figure}

\subsubsection{Scores in the Second Stage}

Table \ref{tab:Average-reward-human} shows average \emph{reward per episode} during the second stage in all layouts. All methods have comparable episode rewards in Asymm. Adv and Coord. Ring. There is no room for improvement since all the methods have reached the highest possible rewards. In Counter Circ., the most complex layout in this category, HSP achieves a better performance than baselines: HSP has a 155+ reward while the most competitive baseline MEP has a reward of 134+. We remark that the reward difference between HSP and MEP is around 20, which is exactly the value of 1 onion soup delivery. This implies that the {\name} agent can, on average, deliver one more soup than all the baselines per game episode with humans, which is a significant improvement.

\begin{table}[h]
\centering
\begin{tabular}{ccccccc} 
\toprule
                                 & Pos. & Asy. Adv.           & Coor. Ring           & Cou. Circ.   & Dis. Toma.        & Many Ord.           \\ 
\midrule
\multirow{2}{*}{FCP}     & 1       & \textbf{339.3}\scriptsize{(38.17)}  & 185.0\scriptsize{(19.73)}     & 127.7\scriptsize{(28.14)}                                       & \textbf{351.3}\scriptsize{(82.25)}  & 312.4\scriptsize{(58.73)}\\ 
                                    & 2       & 321.3\scriptsize{(34.80)}          & 180.7\scriptsize{(22.98)}      &118.3\scriptsize{(29.20)}    & \textbf{320.5}\scriptsize{(66.49)}   &321.3\scriptsize{(61.12)} \\ 
\hline
\multirow{2}{*}{MEP}     & 1       & 329.7\scriptsize{(45.97)}         & \textbf{193.3}\scriptsize{(22.11)} & 136.9\scriptsize{(27.00)}                                     & 341.9 \scriptsize{(65.07)}  & 322.0\scriptsize{(50.53)}\\ 
                                 & 2       & \textbf{324.2}\scriptsize{(39.93)}          & 183.6\scriptsize{(26.75)}  &134.5\scriptsize{(28.63)}     & 313.9\scriptsize{(78.29)} & 319.2\scriptsize{(52.98)}\\ 
\hline
\multirow{2}{*}{TrajDiv} & 1       & 329.0\scriptsize{(43.18)}         & 184.7\scriptsize{(28.60)}          & 112.6\scriptsize{(25.78)}                                               & 327.1\scriptsize{(71.04)} & 312.9\scriptsize{(62.82)}\\ 
                                 & 2       & 318.8\scriptsize{(48.97)}          & 176.8\scriptsize{(31.33)}          & 105.0\scriptsize{(31.05)}            & 316.0\scriptsize{(77.65)} & 334.2\scriptsize{(57.99)}\\ \hline
\multirow{2}{*}{HSP}     & 1       & 336.0\scriptsize{(35.55)}    & 185.5\scriptsize{(38.92)}    &\textbf{158.0}\scriptsize{(28.56)}                                       & 331.6\scriptsize{(61.33)} & \textbf{384.3}\scriptsize{(47.50)}\\ 
                                 & 2       & 318.8\scriptsize{(48.97)}  & \textbf{188.9}\scriptsize{(22.00)}  & \textbf{155.2}\scriptsize{(23.43)}   & 305.9\scriptsize{(58.61)}  & \textbf{380.7}\scriptsize{(62.27)}\\
\bottomrule
\end{tabular}
\caption{Average reward per episode in all layouts with human players in the second stage.}
\label{tab:Average-reward-human}
\end{table}
\end{document}